\def\eqref#1{equation~\ref{#1}}
\def\1{\bm{1}}
\def\mI{{\bm{I}}}
\DeclareMathAlphabet{\mathsfit}{\encodingdefault}{\sfdefault}{m}{sl}
\SetMathAlphabet{\mathsfit}{bold}{\encodingdefault}{\sfdefault}{bx}{n}
\newcommand{\E}{\mathbb{E}}
\newcommand{\R}{\mathbb{R}}
\newcommand{\bftab}{\fontseries{b}\selectfont}
\theoremstyle{plain}
\newtheorem{theorem}{Theorem}[section]
\newtheorem{lemma}[theorem]{Lemma}
\newtheorem{corollary}[theorem]{Corollary}
\theoremstyle{definition}
\newtheorem{definition}{Definition}
\newtheorem{remark}{Remark}
\title{Improving Differentially Private SGD via Randomly Sparsified Gradients}
\author{\name Junyi Zhu \email junyi.zhu@esat.kuleuven.be \\
      \addr Center for Processing Speech and Images, Department of Electrical Engineering (ESAT)\\
      KU Leuven, Belgium
      \AND
      \name Matthew B.\ Blaschko \email matthew.blaschko@esat.kuleuven.be \\
      \addr Center for Processing Speech and Images, Department of Electrical Engineering (ESAT)\\
      KU Leuven, Belgium}
\begin{document}

\maketitle

\begin{abstract}
Differentially private stochastic gradient descent (DP-SGD) has been widely adopted in deep learning to provide rigorously defined privacy, which requires gradient clipping to bound the maximum norm of individual gradients and additive isotropic Gaussian noise. With analysis of the convergence rate of DP-SGD in a non-convex setting, we identify that randomly sparsifying gradients before clipping and noisification adjusts a trade-off between internal components of the convergence bound and leads to a smaller upper bound when the noise is dominant. Additionally, our theoretical analysis and empirical evaluations show that the trade-off is not trivial but possibly a unique property of DP-SGD, as either canceling noisification or gradient clipping eliminates the trade-off in the bound. This observation is indicative, as it implies DP-SGD has special inherent room for (even simply random) gradient compression. To verify the observation an utilize it, we propose an efficient and lightweight extension using random sparsification (RS) to strengthen DP-SGD.  Experiments with various DP-SGD frameworks show that RS can improve performance. Additionally, the produced sparse gradients of RS exhibit advantages in reducing communication cost and strengthening privacy against reconstruction attacks, which are also key problems in private machine learning. 
\end{abstract}

\section{Introduction}
Internet-scale data promises to accelerate the development of data-driven statistical approaches, but the need for privacy constrains the amalgamation of such datasets.  
As a result, private data are in fact isolated, constraining our ability to build models that learn from a large number of instances.
On the other hand, the information contained in locally stored data can also be exposed through releasing the model trained on a local dataset~\citep{model_inversion, membership_inference}, or even reconstructed when gradients generated during training are shared~\citep{dlg,rgap, Zhu2023b}.
To address these issues, many applications of machine learning are expected to be privacy-preserving, while differential privacy (DP) provides a rigorously defined and measurable privacy guarantee.
As described in Definition~\ref{def:DP}, DP defines privacy with respect to the difficulty of distinguishing the outputs of different data:
\begin{definition}[$(\varepsilon,\delta)$-DP~\citep{DP3}]
    \label{def:DP}
    For a pair of neighboring datasets $X, X' \in \mathcal{X}$, \ $X$ is obtained from $X'$ by adding or removing an element. A randomized mechanism $\mathcal{M}: \mathcal{X} \rightarrow \mathcal{R}$ is $(\varepsilon, \delta)$-differentially private, if for any subset of outputs $S \subseteq \mathcal{R}$ it holds that:
    \begin{equation}
    \Pr[\mathcal{M}(X) \in S] \leq e^\varepsilon\Pr[\mathcal{M}(X')\in S] + \delta.
    \end{equation}
\end{definition}
A common paradigm for applying DP in deep learning is differentially private stochastic gradient descent (DP-SGD) proposed by~\citet{DP-SGD}, which lets the randomized mechanism $\mathcal{M}$ output perturbed gradients:
\begin{align}
    f(X) :&= \sum_{x \in X} g(x),\\
    \label{eq:gaussian_mechanism_}
    \mathcal{M}(X) :&= f(X) + \mathcal{N}(0, S^2_f\sigma^2\mI_d),
\end{align}
where $X$ now stands for a batch of data and $g(x) \rightarrow\mathbb{R}^d$ computes gradient given an individual example in the batch. The isotropic Gaussian distributed noise $\mathcal{N}(0, S^2_f\sigma^2\mI)$ is calibrated to $f$'s sensitivity $S_f^2 := \max_{X, X'}\, \|f(X) - f(X')\|$, while the noise multiplier $\sigma$ controls the strength of the privacy guarantee. As $S_f$ is usually unknown, DP-SGD commonly clips the gradient of each individual example in Euclidean norm to a preset bound C, such that $\Bar{f}(X) := 
\sum_{x\in X}g(x)\cdot\min (1, C/\| g(x)\|)$. Then $S_f$ can be safely replaced by $C$ and the randomized mechanism $\mathcal{M}$ can thus be expressed as:
\begin{equation}
    \label{eq:gaussian_mechanism}
    \mathcal{M}(X) := \bar{f}(X) + \mathcal{N}(0, C^2\sigma^2\mI_d).
\end{equation} 
However, gradient clipping gives rise to a squeezed gradient distribution, while noisification further blurs the gradient distribution, both cause adverse effects for optimization. Understanding the influence of these two operations becomes crucial for the progress of DP-SGD.

\subsection{Our contribution}
\label{sec:contribution}
\begin{figure}[t!]
    \centering
    \subfloat[\centering DP-SGD \label{fig:dpsgd}]{\includegraphics[width=0.25\linewidth]{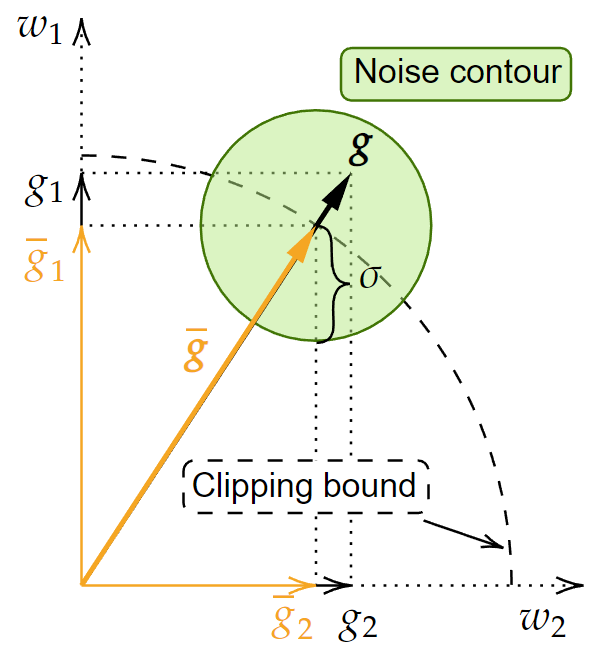}}%
    \subfloat[\centering DP-SGD with RS
    \label{fig:dpsgd-rs}]{\includegraphics[width=0.25\linewidth]{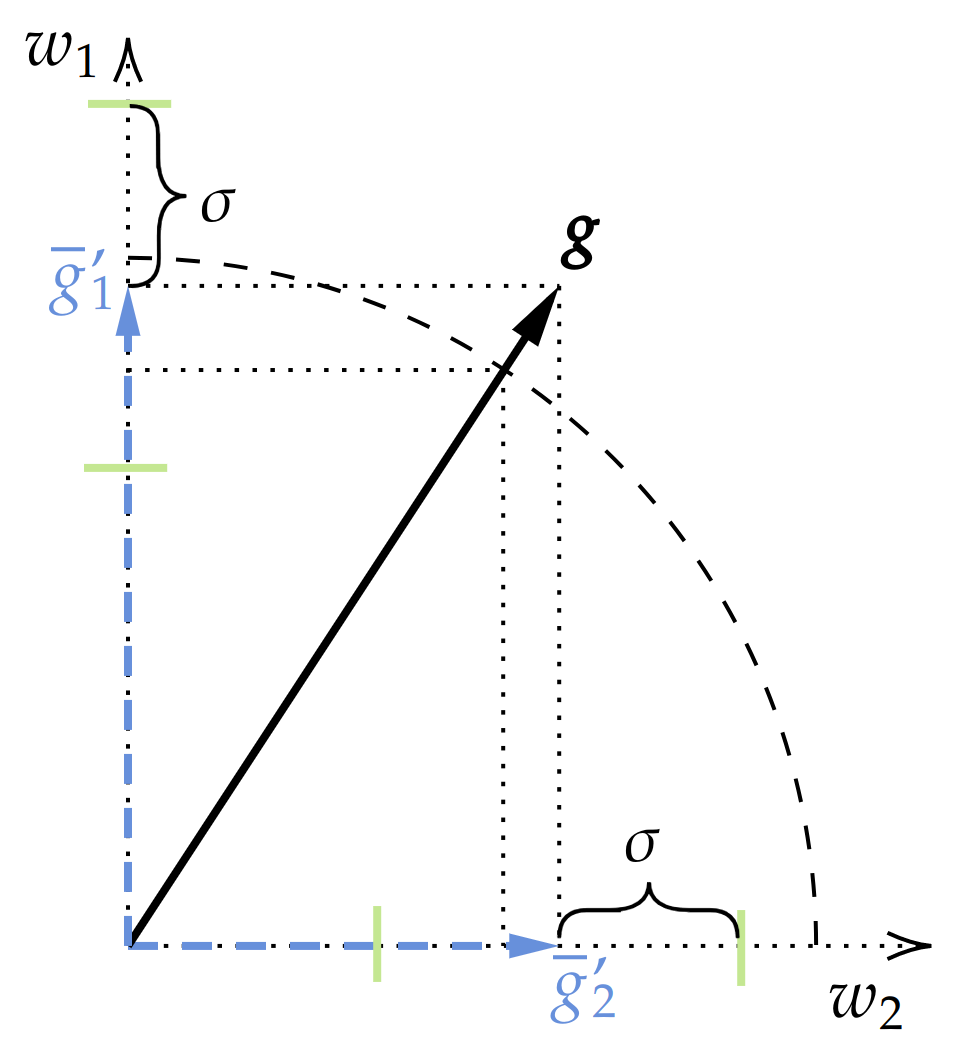}}
    \subfloat[\centering noise norm ratio
    \label{fig:noise_ratio}]{\includegraphics[width=0.25\linewidth]{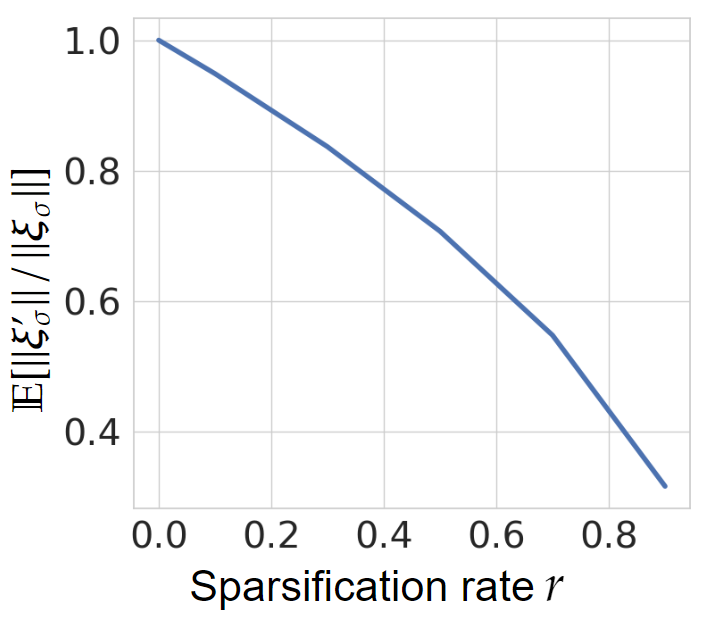}}
    \subfloat[\centering gradient norm ratio \label{fig:grad_ratio}]{\includegraphics[width=0.25\linewidth]{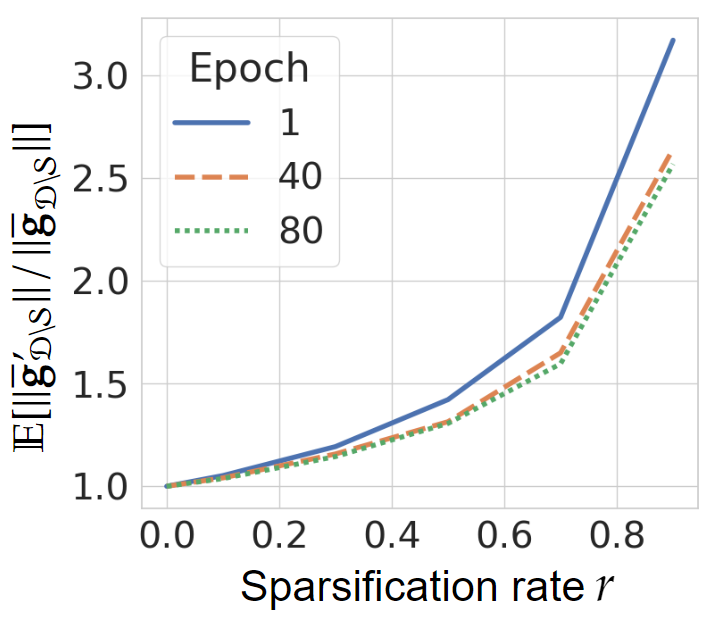}}%
    \caption{(a) Schematic of DP-SGD. We denote $\bm g$ as the gradient vector and omit the notation for input data, $g_1$ and $g_2$ are two entries in $\bm g$ corresponding to the coordinates $w_1$ and $w_2$, $\bar{\bm g}$ is the clipped gradient vector. (b) Schematic of DP-SGD with RS, $\bar{g}'_1 $ and $\bar{g}'_2$ are two possible results under RS. {(c) Empirical ratio of noise norm with RS $\|\bm\xi'_\sigma\|$ and without RS $\|\bm\xi_\sigma\|$, i.e.\ $\|\bm\xi'_\sigma\| / \|\bm\xi_\sigma\|$, we take $d=5.5\times10^5$ which is the dimension of DP-CNN. (d) Empirical ratio of the gradient norm of relaxed coordinates with RS $\|\bar{\bm g}'_{\mathcal{D}\setminus\mathcal{S}}\|$ and without RS $\|\bar{\bm g}_{\mathcal{D}\setminus\mathcal{S}}\|$, i.e.\ $\|\bar{\bm g}'_{\mathcal{D}\setminus\mathcal{S}}\| / \|\bar{\bm g}_{\mathcal{D}\setminus\mathcal{S}}\|$, measured on DP-CNN with dataset CIFAR10.}
    We remark that for (a) and (b) clipping bound $C$ within range $(\max(\{|g_i|\}_{i=1}^d),\; \|\bm g\|)$ is representative, as in high-dimensional space $\|\bm g\| \gg \max(\{|g_i|\}_{i=1}^d)$, while the two strengths still exist for $C$ being smaller. 
    }
    \label{fig:illustration}
\end{figure}
In this work we present a special property of DP-SGD that is established jointly by gradient clipping and noisification. In particular, we analyze DP-SGD in a non-convex and smooth setting and identify that randomly sparsifying gradients before clipping and noisification can trigger a trade-off between internal components of the convergence bound and possibly result in a smaller upper bound. This observation suggests that in the context of DP-SGD randomly sparsified gradients could lead to faster convergence than the full gradient. This also implies that DP-SGD has inherent room for (even simply random) gradient compression. We further note that the same property is not present in other popular SGD schemes.

First, we describe the process of random sparsification (RS) and illustrate its impact on DP-SGD. RS uniformly selects a random subset $\mathcal{S}$ from the full coordinates set $\mathcal{D}$ and zeros out their gradients, we define sparsification rate $r := |\mathcal{S}|/|\mathcal{D}|$ as the percentage of coordinates we would like to sparsify. Figure~\ref{fig:dpsgd},~\ref{fig:dpsgd-rs} schematically illustrate the vanilla DP-SGD and DP-SGD with RS using a 2D example. In this 2D example RS can either have $g_1$ or $g_2$ zeroed-out so $r$ can only be set to 0.5. In higher-dimensional cases it will be possible to set $r$ to other values within the range $(0, 1)$.
Compared with vanilla DP-SGD, the drawback of RS is indeed apparent as RS removes some gradient information. However, on the other side RS has two strengths: (i) due to the isotropic Gaussian noise $\bm \xi_\sigma$, the amount of noise in term of Euclidean norm scales with the gradient dimension $d = |\mathcal{D}|$. With RS the effective $d$ is reduced as the sparsified coordinates do not receive any gradient information, and noisification in those dimensions is no longer necessary. Figure~\ref{fig:noise_ratio} demonstrates the empirical ratio
of the noise norm with RS over without RS $\|\bm\xi'_\sigma\| / \|\bm \xi_\sigma\|$. (ii) Due to gradient clipping, for the coordinates $\mathcal{D}\setminus\mathcal{S}$ that are relaxed, i.e.\ corresponding gradients are not zeroed out, their gradients have a higher magnitude than without RS (see $|\bar{g}'_1| > |\bar{g}_1|$ or $|\bar{g}'_2| > |\bar{g}_2|$), implying the relaxed coordinates are better optimized, Figure~\ref{fig:grad_ratio} demonstrates an empirical ratio of the gradient norm of relaxed coordinates with RS over without RS $\|\bar{\bm g}'_{\mathcal{D}\setminus\mathcal{S}}\| / \|\bar{\bm g}_{\mathcal{D}\setminus\mathcal{S}}\|$.
In \Cref{sec:theory}, we will discuss the integrated impact of the drawback and strengthens of DP-SGD with RS for non-convex and smooth problems and as we will see RS gives rise to a trade-off between internal components of the convergence bound of DP-SGD.
Another interesting observation we will present is that for other popular SGD schemes, i.e.\ SGD with gradient clipping (same as canceling noisification in DP-SGD), noisy SGD (same as canceling clipping in DP-SGD), and vanilla SGD, the respective convergence bounds are only enlarged by applying RS. This observation highlights the fact that the reason for RS improving DP-SGD is not trivially due to the reduced noise, and suggests that \textit{the trade-off raised by RS in the context of DP-SGD is special, which manifests under two preconditions: gradient clipping and noisification}.

An alternative to random sparsification would be e.g.\ to sparsify the smallest entries of the gradient, 
$w_2$ in Figure~\ref{fig:illustration}. It is worth emphasizing that the selection of significant coordinates is also privacy related. It could be done by using a public dataset~\citep{bypassing_ambient_space, low_rank_4} which we do not assume exists, or by providing additional privacy budget for sparse vector techniques~\citep{DP3} or DP selection~\citep{wide}, which have certain technical difficulties of defining the threshold and privacy loss for the selection, and are thus only applied when the gradient is obviously sparse, e.g.\ the embedding layer of a language model.
Perhaps most importantly, the analysis of these methods assumes the original sparsity of the gradients (e.g.\ $\bm g$ lies in $w_1$ in Figure~\ref{fig:illustration}) or low-rank property (e.g.\ $\bm g$ always lies in a subspace). In contrast, our analysis, and the strengths of RS as illustrated in Figure~\ref{fig:illustration}, do not rely on these assumptions, which makes the contribution of this work complementary to previous works.

\paragraph{Paper organization:} In \Cref{sec:related_works}, we discuss related work. In Section~\ref{sec:theory}, we elaborate on the impact of RS on popular SGD schemes and finally present the special trade-off of DP-SGD induced by RS.  In Section~\ref{sec:algorithm}, we provide an efficient and lightweight RS algorithm based on our analysis. In Section~\ref{sec:pros}, we discuss the additional advantages of sparsified gradients. In Section~\ref{sec:experiment} we empirically verify our analysis and show the utility of our proposed algorithm. All proofs are deferred to the Appendix.

\section{Related works}
\label{sec:related_works}
\citet{clipping_bias} analyze the convergence rate of DP-SGD in a non-convex setting with characterization of the adverse effect of the gradient clipping. Many works study adaptive clipping bounds \citep{Galen2021Neurips,Venkatadheeraj2019adaclip}.
Other works study the adverse effect of noisification and prove that the performance of DP-SGD is dependent on the gradient dimension $d$ as, according to \Cref{eq:gaussian_mechanism}, the amount of noise scales with $d$.
\citet{private_erm} show that in a convex setting, DP-SGD achieves excess risk of $\Tilde{O}(\sqrt{d}/n\varepsilon)$, where $n$ is the dataset size. \citet{non_convex_risk} show that the empirical gradient norm decreases to $\Tilde{O}(d^{1/4}/\sqrt{n\varepsilon})$ for DP-SGD with a smooth non-convex loss function.

A line of work builds on gradient space compression to improve the utility of DP-SGD. 
\citet{DP-SGD} propose DP linear probing to pre-train a network on an auxiliary dataset, then transfer the feature extractor and only re-train the linear classifier on the private data. 
%
Similarly, \citet{handcrafted-sgd} adopt ScatterNet \citep{scatternet} to extract handcrafted features.
Both works decrease $d$ by excluding the majority of parameters during DP learning.
Inspired by the empirical observation that the optimization trajectory is contained in a lower-dimensional subspace~\citep{low_rank_gradient1, low_rank_gradient2, low_rank_gradient3}, 
several recent works~\citep{bypassing_ambient_space,do_not_overbill, low_rank_4, reparameterization} project the gradient into a subspace which is identified by auxiliary data or released historical gradients. 
\citet{wide} target NLP tasks where gradients are extremely sparse, and propose a DP selection method.

In contrast to the strategy of reducing gradient space dimension, \citet{tempered_sigmoid} propose a dedicated DP-CNN with tempered activation function which is deemed as robust to the adverse effects of gradient clipping and noisification in DP-SGD. \citet{large_model} observe that DP-SGD works well on full fine-tuning of large language models. {Furthermore, \citet{dplora} propose incorporating parameter-efficient fine-tuning (PEFT) techniques, such as low-rank adaptation~\citep{hu2022lora} and adapter-based fine-tuning~\citep{adapter}. \citet{biastermonly} propose private bias-term only fine-tuning. Both approaches improve the performance of DP fine-tuning and largely reduce the computational cost with large models, as well as the communication overhead in distributed settings. However, PEFT methods require a pre-trained model as a backbone and cannot be applied in scenarios where the network must be trained from scratch.}

{The studies of \citet{DP-SCD} and \citet{DP-CD} on DP coordinate descent compute the gradient of a random coordinate in backpropagation and in this aspect similar to RS. However, there is notable difference between these existing works and our work. \citet{DP-SCD} study the generalized linear model which includes a convex problem and investigate the dual formulation. \citet{DP-CD} also analyze convex optimization problem and rely on precise coordinate-wise regularity measures of the objective function such that each coordinate can be noisified accordingly. Both works derive a modified update step (beside sparsification). In this work, we study DP-SGD with RS in a non-convex setting. We do not assume detailed characterization of the optimization landscape, e.g.\ coordinate-wise smoothness. And RS is applied directly on the gradient computed with the loss function of the network's prediction. Our analysis is thereby more general.}


\section{Applying random sparsification to gradient descent methods}
\label{sec:theory}
\paragraph{Notation \& Terminology}
Let $\mathcal{L}$ be the objective function $\mathcal{L}(w) := \E_{x\in\mathcal{X}}[\ell(w; x)]$ which is $G$-Lipschitz smooth, and $x$ denotes a training example sampled from the dataset $\mathcal{X}$. We define $g_{t,i} := \nabla \ell(w_t; x_i)$ the gradient at step $t$ with example $x_i$ and define the true gradient $\nabla w_t := \E_{x\in\mathcal{X}}[g_{t,i}]$. We assume the gradient deviation $g_{t, i} - \nabla w_t$ is sampled from a zero-mean random variable $\xi_{t}$.
The averaged gradient of $B$ samples at the step $t$ is denoted as $g_t := \frac{1}{B} \sum_i g_{t,i}$, and the averaged clipped gradient as $\bar{g}_t := \frac{1}{B} \sum_i g_{t,i}\cdot\min (1, C/\|g_{t,i}\|)$. To conduct RS, we use a random mask
$m \in \{0,1\}^d$ and uniformly draw $rd$ indices and set these positions in the mask to $0$ while others to $1$ so that $\|m \|_1 = (1-r)d$, $r$ is the predefined sparsification rate. The average of sparsified and clipped gradients can thus be expressed as $\hat{g}_t := \frac{1}{B}\sum_i g_{t, i}'\cdot\min (1, C/\| g_{t, i}'\|)$, where we define $g_{t,i}' := m\odot g_{t.i}$, the Hadamard product is denoted using $\odot$. For noisification, isotropic Gaussian noise is denoted as $\xi_\sigma$, when RS is applied we do $m\odot\xi_\sigma$. As this work involves multiple ways of processing gradients, we summarize their notation in \Cref{tab:notation} for convenience. We do not bold vectors or matrices in the analysis since it is clear from the context. 
As RS modifies DP-SGD, we clarify that RS does not breach privacy with the following theorem:
\begin{restatable}{thm}{rsprivacy}
     For a Gaussian mechanism: $\mathcal{M}(x) := \sum_{x\in X}g(x)\cdot\min (1, C/\| g(x)\|) + \mathcal{N}(0, C^2\sigma^2\mI_d)$, which satisfies $(\varepsilon, \delta)$-DP, after applying RS with a mask $m\in\{0, 1\}^d$, the modified Gaussian mechanism $\mathcal{M}'(x) := \sum_{x\in X}m\odot g(x)\cdot\min (1, C/\|m\odot g(x)\|) + m\odot \mathcal{N}(0, C^2\sigma^2\mI_d)$ also satisfies $(\varepsilon, \delta)$-DP.
\end{restatable}
\begin{proof}
   Note that $\mathcal{M}'$ is equivalent to $\mathcal{M}''(x):= \sum_{x\in X} g_{\mathcal{D}\setminus\mathcal{S}}(x)\cdot\min (1, C/\|g_{\mathcal{D}\setminus\mathcal{S}}(x)\|) + \mathcal{N}(0, C^2\sigma^2\mI_{|\mathcal{D}\setminus\mathcal{S}|})$ in terms of privacy, then it is easy to observe that $\mathcal{M}''(x)$ and $\mathcal{M}(x)$ provide the same level of DP. It should be noted that the scenario of $m = \bm 0$ is not considered in our analysis as it does not align with the objectives of optimization.
\end{proof}

\begin{table}[h]
\begin{center}
\begin{small}
\begin{tabular}{cl}
\toprule
    \underline{Gradient} \\
    $g_{t, i}$ & Gradient at step $t$ with example $x_i$, i.e.\ $\nabla\ell(w_t, x_i)$. \\
    $\nabla w$ & True gradient at step $t$, i.e.\ $\E_{x\in\mathcal{X}}[g_{t, i}]$ \\
    $g_{t}$ & Mini-batch gradient at step $t$, i.e.\ $\frac{1}{B} \sum_i g_{t,i}$\\
    $\bar{g}_{t}$ & Clipped mini-batch gradient at step $t$, i.e.\ $\frac{1}{B} \sum_i g_{t,i}\cdot \min(1,C/\|g_{t,i}\|)$ \\
    $g_{t, i}'$ & Sparsified gradient due to random mask $m$, i.e.\ $m\odot g_{t, i}$.\\
    $\nabla w'$ & Sparsified true gradient due to random mask $m$, i.e.\ $m\odot \E_{x\in\mathcal{X}}[g_{t, i}]$  \\
    $\hat{g}_{t}$ & Sparsified and clipped mini-batch gradient at step $t$, i.e.\ $\frac{1}{B}\sum_i g_{t, i}'\cdot\min (1, C/\| g_{t, i}'\|)$.\\
    \underline{Gradient deivation} \\
    $\xi_t$ & Assumed random variable generating gradient deviations, i.e.\ $g_{t, i} - \nabla w_t$.\\
    $\xi_{t}'$ & Assumed random variable generating sparsified gradient deviations, i.e.\ $g_{t, i}' - \nabla w_t'$.\\
    $p_t$ & True probability distribution of $\xi_t$.\\
    $\Tilde{p}_t$ & A symmetric proxy of $p_t$ which approximates $p_t$ while satisfies $\Tilde{p}_t(\xi_t) = \Tilde{p}_t(-\xi_t)$.\\
    $p_t'$ & True probability distribution of $\xi_t'$.\\
    $\Tilde{p}_t'$ & A symmetric proxy of $p_t'$ which is projected from $p_t$ by random mask $m$.\\
\bottomrule
\end{tabular}
\end{small}
\end{center}
\caption{Notations for gradients.}
\label{tab:notation}
\end{table}
\subsection{Applying random sparsification to vanilla SGD and noisy SGD}
\label{sec:noise_sgd}
We first note that for vanilla SGD~\citep{bottou2018optimization}, applying RS is the same as randomly dropping gradient information, thus RS cannot provide any benefits. Next, we consider the impact of RS under noisification and gradient clipping separately.
If we cancel the gradient clipping in DP-SGD, the resulting optimization method is noisy SGD which has been used in Bayesian learning, e.g.\ stochastic gradient Langevin dynamics \citep{sgld}. When RS is applied (pseudo code is given in Appendix~\ref{app:algorithms}), the amount of noise will be reduced as illustrated in Figure~\ref{fig:illustration}, however such a benefit seems insufficient to compensate the loss of gradient information, which we show in \Cref{prop:perturbed_opti}.

\begin{restatable}{thm}{perturbedopti}
\label{prop:perturbed_opti}
Consider noisy SGD with RS on a $G$-smooth function $\mathcal{L}$ with isotropic Gaussian noise $\xi_{\sigma} \sim \mathcal{N}(0,\,\sigma^2 \mI_d)$, learning rate $\gamma=\frac{1}{G\sqrt{T}}$, batch size B and sparsification rate $r$, assume an upper bound of individual gradient deviation $\|\xi_{t}\|^2 \leq \sigma_g^2$, we have:
\begin{equation}
    \label{eq:thm1}
    \frac{1}{T}\sum_{t=1}^T \|\nabla w_t\|^2 \leq \frac{2G}{(1-r)\sqrt{T}}\Delta_\mathcal{L} + \frac{1}{\sqrt{T}}(\frac{\sigma_g^2}{B} + \frac{d\sigma^2}{B^2}),
\end{equation}
\end{restatable}
where we define $\Delta_\mathcal{L} := \E[\mathcal{L}(w_1)] - \min_w \mathcal{L}(w)$.
We note that the noise term $d\sigma^2/B^2$ on the r.h.s.\ was reduced by a factor of $1-r$ due to the RS, but this factor is canceled out by the same factor from the l.h.s.\ (see Appendix~\ref{app:noisyrs}). Additionally, \Cref{eq:thm1} can also be used to describe the case of vanilla SGD with RS by removing the last term $d\sigma^2/B^2$ at the r.h.s. Overall, we have the following observation:

\fbox{\parbox{\linewidth}{\vspace{-3pt}
\begin{remark}
\Cref{prop:perturbed_opti} suggests that in applying RS to noisy SGD or vanilla SGD, the network may converge more slowly, as $\|\nabla w_t\|^2$ has a larger upper bound for $0<r<1$.
\end{remark}\vspace{-3pt}
}}
\subsection{Applying random sparsification to SGD with gradient clipping}
\label{sec:rs_with_clip}
If we cancel the noisification in DP-SGD, then the optimization method becomes SGD with gradient clipping, which has been widely adopted for large network training to prevent exploding gradients or stabilize optimization \citep{why_clipping,rnn}. 
If RS is applied (pseudo code is given in Appendix~\ref{app:algorithms}), relaxed coordinators can be better optimized as illustrated in Figure~\ref{fig:illustration}, however such benefit also seems insufficient to compensate the loss of gradient information.

First consider the convergence of gradient clipping without sparsification. 
\begin{restatable}{lem}{rsclip}
\label{lem:rsclip}
Consider SGD on a $G$-smooth function $\mathcal{L}$ with gradient clipping of bound $C$, learning rate $\gamma$, we have:
\begin{equation}
    \label{eq:rsclip}
    \E [\langle\nabla w_t, \bar{g}_t\rangle] \leq\frac{1}{\gamma}\E[\mathcal{L}_t - \mathcal{L}_{t+1}] + \gamma\frac{GC^2}{2}.
\end{equation}
\end{restatable}
As the distribution of the gradient is unknown, the expectation involving gradient clipping, i.e.\ $\E [\langle\nabla w_t, \bar{g}_t\rangle]$, impedes the estimation of the convergence rate. Let $p_t$ be the true distribution of the gradient deviation $\xi_{t}$. \citet{clipping_bias} firstly observe that $p_t$ appears to be symmetric and use this property to relate $\E [\langle\nabla w_t, \bar{g}_t\rangle]$ to $\|\nabla w_t\|$:
\begin{restatable}[\citet{clipping_bias}]{lem}{lemchen}
\label{lem:chen}
Assume $p_t(\xi_t) = p_t(-\xi_t),\, \forall \xi_t \in \R^d$, gradient clipping with bound $C$ has the following property:
\begin{equation}
    \label{eq:lemchen}
    \E [\langle\nabla w_t, \bar{g}_t\rangle]
    \geq P_{\xi_t\sim p_t}(\|\xi_t\| < 3C/4)h(\nabla w)\|\nabla w_t\|,
\end{equation}
\end{restatable}
where $h(\nabla w) := \min(\|\nabla w_t\|,C/4)$. 

We see the r.h.s.\ of \Cref{eq:lemchen} is proportional to $\|\nabla w\|$ or $\|\nabla w\|^2$ as long as $P_{\xi_t\sim p_t}(\|\xi_t\| < 3C/4)$ is not close to zero. Combining \Cref{lem:rsclip} with \Cref{lem:chen}, it is possible to form an upper bound for finding the critical point. However, the real gradient distribution cannot be exactly symmetric. Instead of using $p_t$ in \Cref{eq:lemchen}, we can choose a proxy $\Tilde{p}$ which is symmetric and use an error term $b_t$ to represent the difference as $b_t := \E_{\xi_t\sim p_t} [\langle\nabla w,\bar{g}_t\rangle] - \E_{\xi_t\sim \Tilde{p}_t} [\langle\nabla w,\bar{g}_t\rangle]$.
\begin{restatable}[\citet{clipping_bias}]{thm}{thmchen}
\label{thm:chen}
{Consider SGD on a $G$-smooth function $\mathcal{L}$ with gradient clipping bound $C$ and learning rate $\gamma$. Choose a symmetric distribution $\Tilde{p}(\cdot)$ satisfying $\Tilde{p}_t(\xi_t) = \Tilde{p}_t(-\xi_t),\, \forall \xi_t \in \R^d$, and consider $T$ iterations:}
\begin{equation}
    \label{eq:chen}
    \frac{1}{T}\sum_{t=1}^T P_{\xi_t\sim\Tilde{p}_t}(\|\xi_t\| < 3C/4)h(\nabla w_t)\|\nabla w_t\| \leq \frac{\Delta_\mathcal{L}}{\gamma T} + \gamma\frac{GC^2}{2} - \frac{1}{T}\sum_{t=1}^Tb_t.
\end{equation}
\end{restatable}
\citet{clipping_bias} argues that there exists $\Tilde{p}_t$ which is a close approximation of $p_t$ such that $b_t$ is small while $P_{\xi_t\sim\Tilde{p}_t}(\|\xi_t\|<3C/4)$ is bounded away from zero, as in practice $p_t$ tends to be approximately symmetric. They also empirically demonstrate that $p_t$ becomes approximately symmetric during training. So \Cref{thm:chen} indicates the convergence rate of SGD with gradient clipping. To make sure this property of $p_t$ is generally valid in our case, we have also verified that $p_t$ becomes approximately symmetric in our experimental environments in \Cref{sec:experiment} (see \Cref{app:distribution}).

Now consider the convergence rate after applying RS, note that the sparsification happens before clipping, 
so the conclusion cannot be straightforwardly derived from Theorem~\ref{thm:chen}. 
Since $m$, $\xi_t$ are independent:
\begin{align}
    \label{eq:dwdf}
    \E [\langle\nabla w_t, \hat{g}_t\rangle]
    =\E_m\left[\E_{\xi_t'\sim \Tilde{p}_t'}[\langle\nabla w_t', \hat{g}_t\rangle]\right] + \E_m[b_t'],
\end{align}
where we define $\nabla w_t' := m\odot \nabla w_t$, $\xi_t' := m\odot\xi_t$ with corresponding true distribution $p_t'$ and proxy $\Tilde{p}_t'$ which are projected from $p_t$ and $\Tilde{p}_t$, $b_t' := \E_{\xi_t'\sim p_t'} [\langle\nabla w',\hat{g}_t\rangle] - \E_{\xi_t'\sim \Tilde{p}_t'} [\langle\nabla w',\hat{g}_t\rangle]$. Since the projection of a symmetric distribution to a subspace is symmetric, we have $\Tilde{p}_t'(\xi_t') = \Tilde{p}_t'(-\xi_t')$, so following \Cref{lem:chen}, we have for the first term on the r.h.s.\ of \Cref{eq:dwdf}:
\begin{equation}
    \label{eq:h3ze}
    \E_m\left[\E_{\xi_t'\sim \Tilde{p}_t'}[\langle\nabla w_t', \hat{g}_t\rangle]\right] \geq \E_m[P_{\xi_t'\sim\Tilde{p}_t'}(\|\xi_t'\| < 3C/4)h(\nabla w_t')\|\nabla w_t'\|].
\end{equation}
Then to solve the expectation over the random mask $m$ in the \Cref{eq:h3ze}, we provide Lemma~\ref{lem:bridge}:
\begin{restatable}{lem}{bridge}
\label{lem:bridge}
Apply RS with sparsification rate $r$, choose $\Tilde{p}_t$ such that $P_{\xi_t\sim\Tilde{p}_t}(\|\xi_t\| < 3C/4) \geq \sqrt{1-r}$, then $\exists\,\kappa_t\in (1-r, 1)$ such that:
\begin{equation}
    \E_m[P_{\xi_t'\sim\Tilde{p}_t'}(\|\xi_t'\| < 3C/4)h(\nabla w_t')\|\nabla w_t'\|]= \kappa_t P_{\xi_t\sim\Tilde{p}_t}(\|\xi_t\| < 3C/4)h(\nabla w_t)\|\nabla w_t\|,
\end{equation}
\end{restatable}
$\kappa_t$ takes the value $1$ when $r=0$. Based on Lemma~\ref{lem:bridge}, we provide~\Cref{prop:rsandclip} to characterize the convergence of SGD with gradient clipping and RS.
\begin{restatable}{thm}{rsandclip}
\label{prop:rsandclip}
{Consider SGD on a $G$-smooth function $\mathcal{L}$ with gradient clipping of bound $C$, learning rate $\gamma$, and apply RS with sparsification rate $r$. Choose a symmetric distribution $\Tilde{p}(\cdot)$ satisfying $\Tilde{p}_t(\xi_t) = \Tilde{p}_t(-\xi_t),\, \forall \xi_t \in \R^d$, while $P_{\xi_t\sim\Tilde{p}_t}(\|\xi_t\| < 3C/4) \geq \sqrt{1-r}$, and consider $T$ iterations, then $\exists\,\kappa\in (1-r, 1)$ s.t.:}
\begin{equation}
    \label{eq:rsandclip}
    \frac{1}{T}\sum_{t=1}^T P_{\xi_t\sim\Tilde{p}_t}(\|\xi_t\| < 3C/4)h(\nabla w_t)\|\nabla w_t\| \leq\frac{1}{\kappa}( \frac{\Delta_\mathcal{L}}{\gamma T} + \gamma\frac{GC^2}{2} - \frac{1}{T}\sum_{t=1}^T\E_m[b_t']),
\end{equation}
\end{restatable}
\fbox{\parbox{\linewidth}{\vspace{-3pt} 
\begin{remark}
Consider that $b_t,\,b_t'$ tend to be small and negligible, compared with Theorem~\ref{thm:chen}, \Cref{prop:rsandclip} suggests that RS could impede the convergence of SGD with gradient clipping as $1/\kappa > 1$ for $r > 0$.
\end{remark}
\vspace{-3pt}
}}
\subsection{Applying random sparsification to DP-SGD}
\label{sec:rs_dpsgd}

Although there is seemingly no advantage of using RS for noisy SGD or SGD with gradient clipping,  
we show that in case of DP-SGD, i.e.\ when noisification and gradient clipping are both presented, RS can induce a trade-off between internal components of the convergence bound and possibly achieve a smaller upper bound.
\begin{restatable}{lem}{lemdpsgd}
\label{lem:dpsgd}
Consider DP-SGD on a $G$-smooth function $\mathcal{L}$ with clipping bound $C$, isotropic Gaussian noise $\xi_{\sigma} \sim \mathcal{N}(0,\,\sigma^2 C^2\mI_d)$, learning rate $\gamma$, batch size $B$, sparsification rate $r$, we have:
\begin{align}
    \E [\langle\nabla w_t, \hat{g}_t\rangle] \leq\frac{1}{\gamma}\E[\mathcal{L}_t - \mathcal{L}_{t+1}] + \gamma\frac{GC^2}{2} + (1-r)\gamma\Delta_\sigma,
\end{align}
\end{restatable}
where we have defined $\Delta_\sigma := \frac{C^2\sigma^2dG}{2B^2}$. Now we provide \Cref{prop:dprs} to characterize the convergence rate of DP-SGD with RS.
\begin{restatable}{thm}{dprs}
\label{prop:dprs}
{Consider DP-SGD on a $G$-smooth function $\mathcal{L}$ with clipping bound $C$, isotropic Gaussian noise $\xi_{\sigma} \sim \mathcal{N}(0,\,\sigma^2 C^2\mI_d)$, learning rate $\gamma$, batch size $B$, and apply RS with sparsification rate $r$. Choose a symmetric distribution $\Tilde{p}(\cdot)$ satisfying $\Tilde{p}_t(\xi_t) = \Tilde{p}_t(-\xi_t),\, \forall \xi_t \in \R^d$, while $P_{\xi_t\sim\Tilde{p}_t}(\|\xi_t\| < 3C/4) \geq \sqrt{1-r}$, and consider $T$ iterations, then $\exists\,\kappa\in (1-r,1)$, such that:}
\begin{align}
    \label{eq:dprs}
    \frac{1}{T}\sum_{t=1}^T P_{\xi_t\sim\Tilde{p}_t}(\|\xi_t\| < 3C/4)h(\nabla w_t)\|\nabla w_t\| \leq\frac{1}{\kappa}( \frac{\Delta_\mathcal{L}}{\gamma T} + \gamma\frac{GC^2}{2} - \frac{1}{T}\sum_{t=1}^T\E_m[b_t']) + \frac{1-r}{\kappa}\gamma\Delta_\sigma.
\end{align}
\end{restatable}
It is worth noting that in \Cref{prop:dprs} on the r.h.s.\ the noise term $\Delta_\sigma$ has a factor $1-r$ which is induced by sparsification, but unlike under noisy SGD (see \Cref{prop:perturbed_opti}), this factor is not fully canceled out as $\kappa > 1-r$. Also note that with no sparsification ($r = 0$), we have $\kappa = 1$, \Cref{prop:dprs} has the following form:
\begin{equation}
    \label{eq:dpsgd}
    \frac{1}{T}\sum_{t=1}^T P_{\xi_t\sim\Tilde{p}_t}(\|\xi_t\| < 3C/4)h(\nabla w_t)\|\nabla w_t\|\leq \frac{\Delta_\mathcal{L}}{\gamma T} + \gamma\left(\frac{GC^2}{2} + \Delta_\sigma\right) - \frac{1}{T}\sum_{t=1}^Tb_t,
\end{equation}
which recovers the convergence bound of vanilla DP-SGD~\citep{clipping_bias}, implying that our convergence bound over RS is as tight as before. Comparing \Cref{eq:dpsgd} with~\Cref{prop:dprs}, we observe:

\fbox{\parbox{\linewidth}{\vspace{-3pt}
\begin{remark}
\label{rem:dprs}
RS introduces two factors on the r.h.s.: $1/\kappa, (1-r)/\kappa$. The noise term $\Delta_\sigma$ is reduced by $(1-r)/\kappa$ while the remaining terms are enlarged by by $1/\kappa$: a trade-off is established. The respective magnitude of $\Delta_\sigma$ and other terms determines whether it is a gain or loss to apply RS. It is worthwhile to note that $\Delta_\sigma$ scales with $C^2\sigma^2$ and the gradient dimension $d$, and is inversely proportional to $B^2$.
\end{remark}
\vspace{-3pt}
}}

We also present the upper bound using privacy budget variables $(\varepsilon, \delta)$ in Appendix~\ref{app:dprs} with Corollary~\ref{cor:ddsgd_dp}.

\section{Implementation of random sparsification for DP-SGD}
\label{sec:algorithm} \Cref{prop:dprs} reveals that RS induces a trade-off between the internal components of the convergence bound and has a chance to accelerate the convergence. However, the best sparsification rate is infeasible to be computed \emph{a priori} and possibly varying during optimization.
In this section, we present a practically efficient and lightweight RS approach. \emph{The additional cost of running RS is negligible}.
\Cref{alg:random_freeze} outlines our approach.
{RS is also compatible with SGD with gradient momentum and Adam~\citep{adam}.} In case of non-zero momentum, coordinates that have been selected can still be updated as long as their velocity has not decayed to zero. 
To make the algorithm efficient in practice, we adopt gradual cooling (Line~\ref{alg:line:gradual_cooling}) and per-epoch randomization (Line~\ref{alg:line:per_epoch}), which we discuss in the sequel.
\begin{algorithm}
 \caption{DP-SGD with Random Sparsification}
 \label{alg:random_freeze}
    \hspace*{\algorithmicindent}
    \textbf{Input:} Initial parameters $w_0$; Epochs $E$; Sparsification rate: $r^*$; Clipping bound: $C$; Noise multiplier $\sigma$; Momentum: $\mu$; Learning rate $\gamma$.
\begin{algorithmic}[1]
    \For{$e = 0$ to $E-1$}
        \LComment{Gradual cooling}
        \State $r(e) = r^*\cdot\frac{e}{E-1}$;
        \label{alg:line:gradual_cooling}
        \LComment{Generate a random mask every epoch}
        \State  $m \in \{0, 1\}^d$, s.t. $\|m\|_1 = d\cdot (1-r(e))$;
        \label{alg:line:per_epoch}
        \For{$t = 0$ to $T-1$}
            \LComment{{For each $x_i$ in the Poisson-sampled batch $\mathcal{B}$}}
            \State $g_{t,i} = \nabla \ell(w_t, x_i)$;
            \LComment{Sparsify gradient}
            \State $g_{t,i}' = m\odot g_{t,i}$;
            \LComment{Clip each individual gradient}
            \State $\hat{g}_{t} = \frac{1}{|\mathcal{B}|}\sum_{i\in\mathcal{B}} g_{t,i}'\cdot\min (1, C/\| g_{t,i}'\|)$;
            \LComment{Add sparsified noise}
            \State $\Tilde{g}_t = 
            \hat{g}_t + m\odot\mathcal{N}(0, \frac{C^2\sigma^2}{|\mathcal{B}|^2}\mI_d)$;
            \LComment{Update parameters}
            \State $v_{t+1} = \mu\cdot v_t + \Tilde{g}_t, w_{t+1} = w_t - \gamma v_{t+1}$;
        \EndFor
    \EndFor
\end{algorithmic}
\end{algorithm}

\subsection{Gradual cooling}
\label{sec:gradual_cool}
In practice we find that if we initiate the training with a constant large sparsification rate $r$, the network converges slowly and performs poorly when the privacy budget has been fully consumed.
According to~\Cref{prop:dprs} and Remark~\ref{rem:dprs},
we see sparsification is beneficial once $\Delta_\sigma$ is significant compared with the other terms in the convergence bound. At early training stages the network converges fast: $\E[\mathcal{L}_t - \mathcal{L}_{t+1}]$ is large, while during training the optimization reaches a plateau: $\E[\mathcal{L}_t - \mathcal{L}_{t+1}]$ decays. In contrast, $\Delta_\sigma$ is always fixed and therefore its relative significance grows progressively.
To fit this dynamics, we use a simple gradual cooling strategy which linearly ramps up the sparsification rate from $0$ to $r^*$ 
during training, i.e.\ $r = r^*\cdot\frac{e}{E-1}$.
We provide a more detailed discussion in Appendix~\ref{app:gradual_cooling}.
\subsection{Per-epoch randomization}
\label{sec:per_epoch}
RS alternates the optimization direction. As SGD takes several steps to reach the local minimum, frequently alternating optimization direction might be harmful, which we have also observed in the experiments.
Considering that tuning over the number of iterations for refreshing the random mask can be expensive, in this work we set the refresh to be per-epoch.
Another strength of per-epoch randomization is that for one epoch there are certainly $(1-r)d$  coordinates updated, which is favorable in distributed learning as communication overhead is a key issue, and data are not transmitted every iteration. For per-iteration randomization, the cumulative number of updated parameters depends on the sparsification rate $r$ and iterations between two communication rounds, resulting in higher communication overheads than per-epoch randomization.

\section{Advantages of sparsified gradients}
\label{sec:pros}
Sparse representation of the gradient generally offers additional advantages. Our analysis in \Cref{sec:theory} demonstrates that DP-SGD provides an inherent potential for trivial random gradient compression. In the sequel, we discuss the benefits of sparsified 
gradients in terms of private machine learning. 
\paragraph{Reduced communication overhead of DP federated learning}
A line of work studies how to incorporate differential privacy in federated learning, which is a distributed learning scheme with the interest of protecting the privacy of participants \citep{fed1, fed2, feddp, fed5}.
While a major issue in federated learning is the communication bottleneck, sparse representation produced by RS can be transferred in form of non-zero values and indices. The cost of indexing is logarithmic in the number of parameters. Considering that $log_210^9 < 32$ and for per-epoch randomization multiple rounds of communication could share the same indices, the cost of indexing is negligible, and communication overhead by RS is reduced to $\Tilde{O}(1-r)$. 

\paragraph{Strengthened privacy against gradient reconstruction attacks}
A gradient reconstruction attack is one of the most hazardous privacy attacks, which assumes that the adversary has access to victim's gradient and intends to recover the victim's training data through gradient matching~\citep{dlg,inverting_gradients,rgap,see_through_gradient, Zhu2023b}. 
This is a common scenario in distributed learning schemes, as participants need to share their local update when computing the global model. 
\citet{rgap} provide a closed-form solution of a gradient reconstruction attack: for a certain layer one can form a constraint matrix $K$ over the input $x$ given $g$, so that $Kx = g$,
where $x, g$ are flattened vectors \citep[Equation (15)]{rgap}. When $K$ is overdetermined, after receiving the gradient $g$, the input $x$ can be reconstructed through least squares:
$x = (K^\top K)^{-1}K^\top g$.
To prevent such an attack, DP-SGD can be introduced so that $g$ will be perturbed by Gaussian noise $\xi_\sigma$, which leads to a squared reconstruction error:
\begin{equation}
    \label{eq:reconstruction_error}
    \|x - \hat{x}\|^2 = \|(K^\top K)^{-1}K^\top\xi_\sigma\|^2.
\end{equation}
If we add more noise, the expected squared error $\E[\|x - \hat{x}\|^2]$ will be increased, but at the cost that the network will lose performance.
However, under a certain noise level, applying RS can further amplify the reconstruction error (although the noise will also be sparsified).

To demonstrate this, we consider a single layer network with a one digit input $x\in\R$, a constraint vector $k\in\R^d$ and corresponding gradient vector $g\in\R^d$:
\begin{restatable}{thm}{recerror}
\label{prop:rec_error}
Conduct an attack using \Cref{eq:reconstruction_error} on the target data $x$, consider that the gradient $g$ is perturbed by isotropic Gaussian noise $\xi_\sigma\sim \mathcal{N}(0, \sigma^2\mI_d)$ and further sparsified with sparsification rate $r\in(0, 1)$. The expected squared error of reconstruction is:
\begin{equation}
    \E[\|x - \hat{x}\|^2]\geq \frac{\sigma^2}{(1-r)\|k\|^2}.
\end{equation}
\end{restatable}
We see the expected squared error has a lower bound which increases monotonically with the sparsification rate $r$. We also verify this phenomenon under a more general attacking scenario in \Cref{sec:experiment:advantages}.

\section{Experiments}
\label{sec:experiment}
\paragraph{Setup:} 
Our code is implemented in PyTorch~\citep{paszke2019pytorch}.
To compute the gradients of an individual example in a mini-batch we use the BackPACK package \citep{backpack}. 
Cumulative privacy loss has been tracked with the \citet{Opacus} package, which adopts R\'enyi differential privacy~\citep{renyi,renyi2}.
A Poisson-sampled batch of data at each iteration implies privacy amplification \citep{privacy_amplification3, privacy_amplification4, privacy_amplification5}.
In the implementation of many previous works~\citep{tempered_sigmoid,handcrafted-sgd,do_not_overbill}, sampling is conducted by randomly shuffling and partitioning the dataset into batches of fixed size, we follow this convention.
We focus on DP image classification and run all experiments on a cluster within the same container environment 5 times using the same group of 5 random seeds. Our code is available at \url{https://github.com/JunyiZhu-AI/RandomSparsification}. 

%
\begin{figure}[t!]
    \centering
    \subfloat[\centering $\sigma=2^0, C=2^{-2}$ \label{fig:sigma0c-2}]{\includegraphics[width=0.22\linewidth]{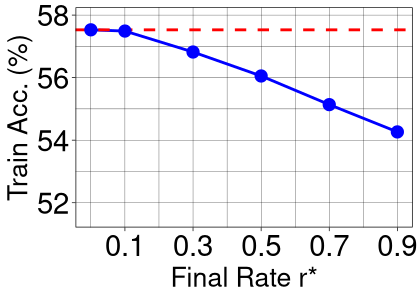}}\hspace{0.03\textwidth}
    \subfloat[\centering $\sigma=2^1, C=2^{-2}$
    \label{fig:sigma2c-2}]{\includegraphics[width=0.22\linewidth]{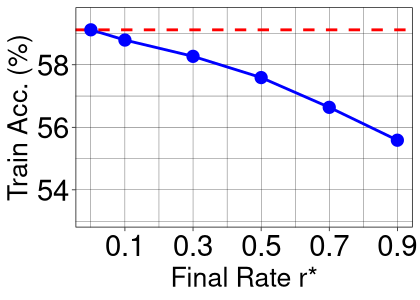}}\hspace{0.03\textwidth}
    \subfloat[\centering $\sigma=2^2, C=2^{-2}$
    \label{fig:sigma4c-2}]{\includegraphics[width=0.22\linewidth]{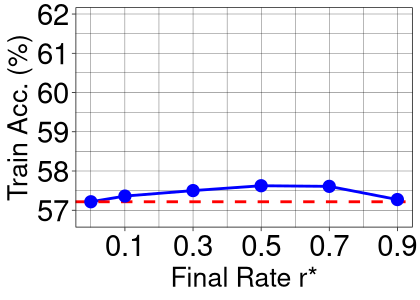}}\hspace{0.03\textwidth}

    \subfloat[\centering $\sigma=2^0, C=2^{-1}$ \label{fig:sigma0c-1}]{\includegraphics[width=0.22\linewidth]{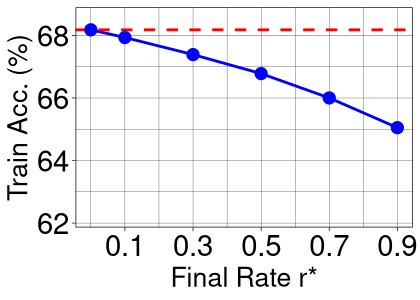}}\hspace{0.03\textwidth}
    \subfloat[\centering $\sigma=2^1, C=2^{-1}$
    \label{fig:sigma2c-1}]{\includegraphics[width=0.22\linewidth]{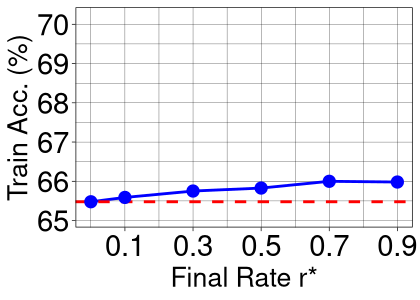}}\hspace{0.03\textwidth}
    \subfloat[\centering $\sigma=2^2, C=2^{-1}$
    \label{fig:sigma4c-1}]{\includegraphics[width=0.22\linewidth]{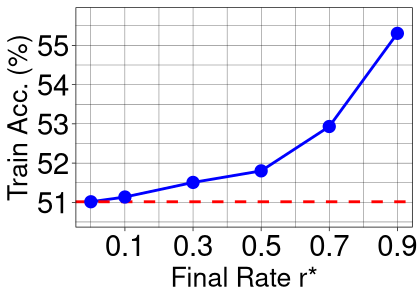}}\hspace{0.03\textwidth}
    
    \subfloat[\centering $\sigma=2^0, C=2^{0}$ \label{fig:sigma0c0}]{\includegraphics[width=0.22\linewidth]{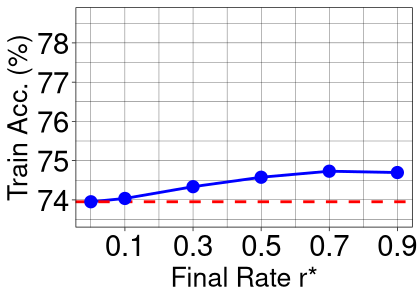}}\hspace{0.03\textwidth}
    \subfloat[\centering $\sigma=2^1, C=2^{0}$
    \label{fig:sigma2c0}]{\includegraphics[width=0.22\linewidth]{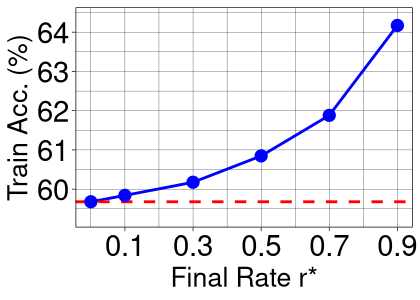}}\hspace{0.03\textwidth}
    \subfloat[\centering $\sigma=2^2, C=2^{0}$
    \label{fig:sigma4c0}]{\includegraphics[width=0.22\linewidth]{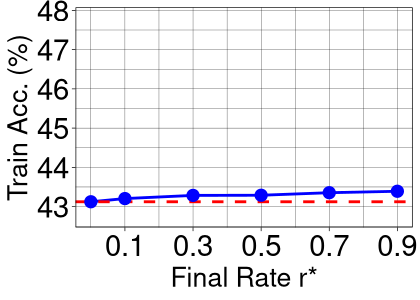}}\hspace{0.03\textwidth}
    \caption{Train accuracy vs.\ final sparsification rate $r^*$ over various combinations of noise multiplier $\sigma$ and clipping bound $C$, red dashed line marks the performance of no sparsification. We set batch size to 1000 and train for 100 epochs, the network is DP-CNN. The result shows that the trade-off favors RS as $\Delta_\sigma \propto \sigma^2C^2$ increasing, except the last \Cref{fig:sigma4c0}, where due to excessive noise the network degrades after a few epochs.}
    \label{fig:sensitivity}
    \vspace{-7pt}
\end{figure}
\begin{figure}[t!]
    \centering
    \subfloat[\centering $C=2^{-2}$ \label{fig:c-2}]{\includegraphics[width=0.22\linewidth]{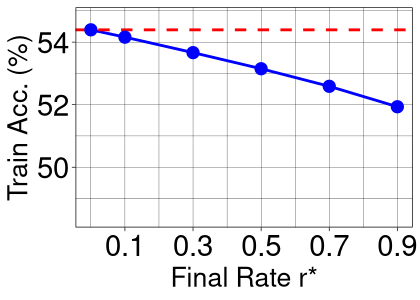}}\hspace{0.03\textwidth}
    \subfloat[\centering $C=2^{-1}$
    \label{fig:c-1}]{\includegraphics[width=0.22\linewidth]{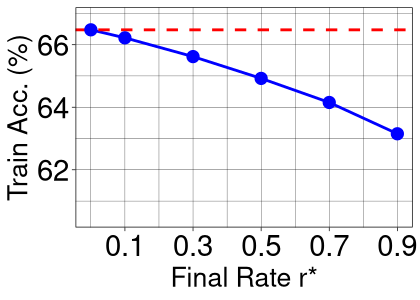}}\hspace{0.03\textwidth}
    \subfloat[\centering $C=2^{0}$
    \label{fig:c0}]{\includegraphics[width=0.22\linewidth]{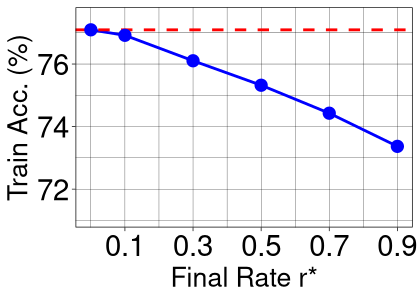}}\hspace{0.03\textwidth}

    \subfloat[\centering $\sigma=2^0$ \label{fig:sigma0}]{\includegraphics[width=0.22\linewidth]{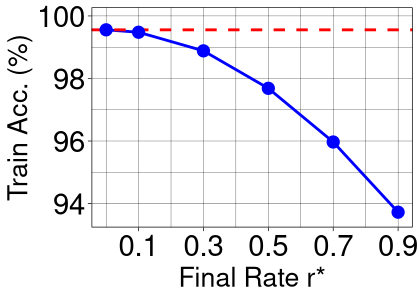}}\hspace{0.03\textwidth}
    \subfloat[\centering $\sigma=2^1$
    \label{fig:sigma2}]{\includegraphics[width=0.22\linewidth]{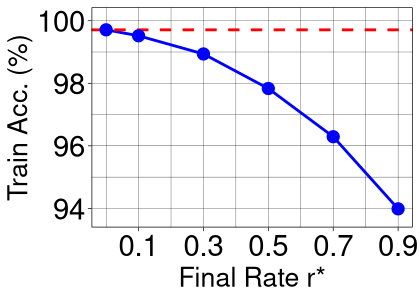}}\hspace{0.03\textwidth}
    \subfloat[\centering $\sigma=2^2$
    \label{fig:sigma4}]{\includegraphics[width=0.22\linewidth]{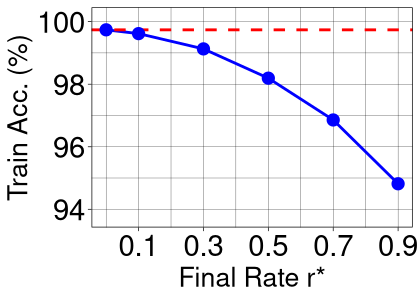}}\hspace{0.03\textwidth}
    \caption{Train accuracy vs.\ final sparsification rate $r^*$ over SGD with gradient clipping or noisy SGD (see Algorithms \ref{alg:noisy_sgd} and \ref{alg:rs_clip} for pseudo code), other settings are the same as Figure~\ref{fig:sensitivity}. The red dashed line marks the performance of no sparsification. Without noisification or gradient clipping RS only leads to utility drop, which matches our theoretical analysis and indicates that the trade-off is a unique property of DP-SGD.}
    \label{fig:ablation}
\end{figure}
\subsection{Evidence of the trade-off and its uniqueness for DP-SGD}
\Cref{prop:dprs} implies that when the noise term $\Delta_\sigma$ is dominant in the convergence bound, the trade-off induced by RS can achieve a smaller upper bound on the convergence rate. In the context of DP training, the number of iterations is limited, so faster convergence means better accuracy. Since $\Delta_\sigma \propto \sigma^2C^2$, we conduct experiments on various combinations of $\sigma$ and $C$ to verify our analysis. When conducting RS, we always adopt gradual cooling as discussed in \Cref{sec:gradual_cool} to reach the final sparsification rate. Based on Figure~\ref{fig:sensitivity}, except the last \Cref{fig:sigma4c0}, where due to excessive noise the network does not converge after a few training epochs, we see: (i) from left to right, as $\sigma$ goes up, more RS is preferred; (ii) from top to bottom, as $C$ increases, more can be gained from RS; (iii) the trade-off has the same tendency for the same value of $\sigma^2C^2$. These observations verify our analysis. We note that as $C^2\sigma^2$ increases to $2^2$, i.e.\ \Cref{fig:sigma4c-1} and \ref{fig:sigma2c0}, RS can improve the performance of DP-SGD by more than $4\%$. Additionally, when $C^2\sigma^2=1$, i.e.\ \Cref{fig:sigma4c-2},~\ref{fig:sigma2c-1},~\ref{fig:sigma0c0}, The performance is maintained under RS (or slightly increased). It should be noted that such a flat trend is also desirable as sparse gradients produced by RS are beneficial for private learning as discussed in \Cref{sec:pros}.

Theorems~\ref{prop:perturbed_opti} and~\ref{prop:rsandclip} suggest that the trade-off of DP-SGD is jointly established by noisification and clipping, as with either absent, RS just reduces performance. To verify this, we conduct experiments on two ablation variants of DP-SGD: noisy SGD and SGD with gradient clipping (pseudo code is given in Appendix~\ref{app:algorithms}). Figure~\ref{fig:ablation} demonstrates that in the absence of either noisification or gradient clipping, RS always makes the convergence slower, which indicates that the trade-off is a unique property of DP-SGD.
\begin{table}[t!]
\begin{center}
\begin{small}
\begin{tabular}{c|lccccc}
\toprule
Dataset         &Approaches          &$\varepsilon$      &$d$        &Baseline       &RS (ours) &Difference\\
\midrule
\multirow{4}{*}{CIFAR10}
         &\multirow{2}{*}{H-CNN}               &3.0                &\multirow{2}{*}{187K}      &$69.7\pm0.19$   &\bftab 70.0 $\pm$ 0.11 &+0.3\\
         &           &1.0                &          &$62.4\pm0.13$        &\bftab 63.2 $\pm$ 0.10 &+0.8\\
\cline{2-7}
         &\multirow{2}{*}{DP-CNN}           &3.0                &\multirow{2}{*}{550K}          &$62.8\pm0.10$   &\bftab 64.3 $\pm$ 0.17&+1.5\\
         &           &1.0                &           &$52.5\pm0.25$     &\bftab 55.1 $\pm$ 0.11&+2.6\\
\midrule
\multirow{4}{*}{SVHN}
         &\multirow{2}{*}{H-CNN}               &3.0                &\multirow{2}{*}{187K}      &$85.9\pm0.06$   &\bftab 86.8 $\pm$ 0.13&+0.9\\
        &           &1.0                &          &$80.7\pm0.15$        &\bftab 82.2 $\pm$ 0.19&+1.5\\
\cline{2-7}
     &\multirow{2}{*}{DP-CNN}           &3.0                &\multirow{2}{*}{550K}           &$83.4\pm0.11$   &\bftab 84.5 $\pm$ 0.14&+1.1\\
     &           &1.0                &           &$76.0\pm0.05$     &\bftab 79.1 $\pm$ 0.13&+3.1\\
\midrule
\multirow{4}{*}{FMNIST}
         &\multirow{2}{*}{H-CNN/S}               &3.0                &\multirow{2}{*}{33K}      &$88.9\pm0.09$   &\bftab 89.2 $\pm$ 0.07&+0.3\\        &           &1.0                &          &$85.8\pm0.17$        &\bftab 87.0 $\pm$ 0.08&+1.2\\
\cline{2-7}    &\multirow{2}{*}{DP-CNN/S}           &3.0                &\multirow{2}{*}{26K}          &$86.6\pm0.09$     &\bftab 87.4 $\pm$ 0.10&+0.8\\
     &           &1.0                &           &$83.2\pm0.10$   &\bftab 84.5 $\pm$ 0.16&+1.3\\
\bottomrule
\end{tabular}
\end{small}
\caption{Test accuracy (\% $\pm$ SEM) 
before and after adopting random sparsification. The difference of mean accuracy is presented in the last column.
}
\label{tab:acc}
\end{center}
\end{table}
\subsection{Improving performance of DP networks}
In practice, the clipping bound $C$ and noise multiplier $\sigma$ are tuned as hyperparameters for the networks to achieve the best performance under a certain privacy budget.
To investigate whether RS has a chance to achieve better performance in practical settings, we conduct experiments on the following baselines of DP image classification: (i)  DP-CNN~\citep{tempered_sigmoid}: a network for training from scratch;  (ii) H-CNN~\citep{handcrafted-sgd}: a network for training solely on a private dataset, which uses handcrafted features. {Both are representative SOTA frameworks in high privacy regimes \citep{de2022unlocking}.
}
We adopt the best hyperparameters provided in previous works, then we do grid search for baselines and RS over the clipping bound $C\in\{0.1, 0.5, 1\}$ where $C=0.1$ is given in previous works, and epochs $E \in \{E^*,\,1.2\cdot E^*,\,1.5\cdot E^*\}$ ($\sigma$ is adapted accordingly), where $E^*$ is the best given in previous works for different frameworks. When conducting RS we use gradual cooling and search for the final sparsification rate $r^*\in \{0.5,\,0.7,\,0.9\}$. We also find an interesting scaling rule to efficiently find the optimal hyperparameters for DP frameworks which is given in Appendix~\ref{app:scaling_rule}. As a result, some baseline performances we give are higher than reported in previous works. As in the previous section, we observe that RS is beneficial when $\Delta_\sigma$ is dominant, thus we conduct experiments at high privacy regimes, i.e.\ $\varepsilon\in\{1, 3\}$.

Table~\ref{tab:acc} shows that RS improves the performance of baselines and the gain is significant for smaller $\varepsilon$, where more noise is added.
We further note that DP-CNN generally gains more from RS, as it has comparatively more parameters and $\Delta_\sigma$ scales with the gradient dimension $d$.
\subsection{Advantages of sparse gradients}
\label{sec:experiment:advantages}
Beyond the improvement in performance, sparse gradients also offer additional advantages in the area of private machine learning as discussed in Section~\ref{sec:pros}. To investigate the sparsity we can achieve, we directly plugged RS into various frameworks, including: DP-TL~\citep{handcrafted-sgd}: a transfer learning framework, and GEP~\citep{do_not_overbill}: a projected DP-SGD framework, as well as DP-CNN and H-CNN, using the hyperparameters of no sparsification given in previous works (so the same iterations). Figure~\ref{fig:acc_vs_rate} shows that we can achieve up to 0.7 or 0.9 final sparsification rate, i.e.\ 0.35 or 0.45 average sparsity under gradual cooling, while maintaining the same performance.

\begin{figure}[t!]
    \centering
    \subfloat[\centering DP-CNN, $\varepsilon=3$ \label{fig:acc_vs_rate-dpcnn}]{\includegraphics[width=0.22\linewidth]{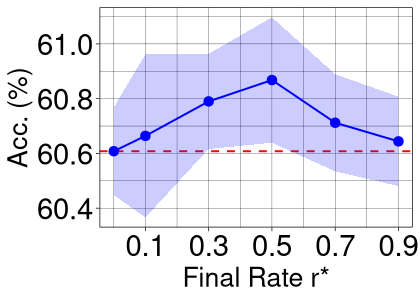}}\hspace{0.02\textwidth}
    \subfloat[\centering H-CNN, $\varepsilon=3$\label{fig:acc_vs_rate-hcnn}]{{\includegraphics[width=0.22\linewidth]{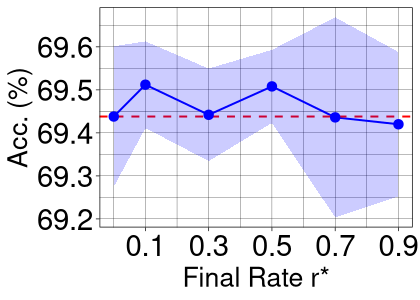} }}\hspace{0.02\textwidth}
    \subfloat[\centering DP-TL, $\varepsilon=2$\label{fig:acc_vs_rate-dptl}]{{\includegraphics[width=0.22\linewidth]{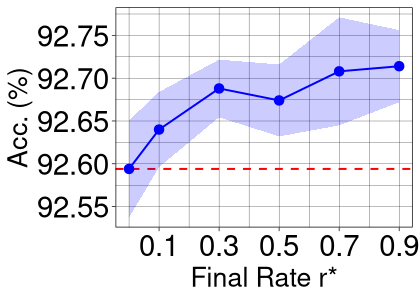} }}\hspace{0.02\textwidth}
    \subfloat[\centering GEP, $\varepsilon=8$\label{fig:acc_vs_rate-gep}]{{\includegraphics[width=0.22\linewidth]{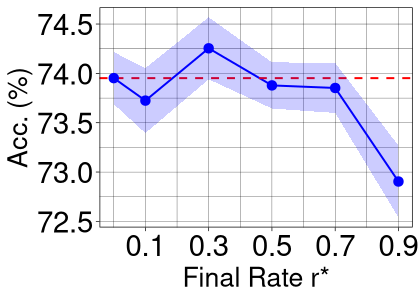} }}%
    \caption{Test accuracy (\% $\pm$ SEM) vs.\ final sparsification rate $r^*$ when random sparsification is directly plugged into different frameworks using the hyperparameters of no sparsification given in previous works. }
    \label{fig:acc_vs_rate}
\end{figure}

In Section~\ref{sec:pros} we have theoretically proven that with sparse representation, the communication overhead of DP federated learning improves.
We also provide \Cref{prop:rec_error} to show that the sparse gradients produced by RS strengthen the privacy against gradient reconstruction attacks through an example of reconstruction using the linear relation between data and gradient. To verify this advantage under a general setting, we conduct experiments on ResNet34~\citep{kaiming2016cvpr} using the attack approach Inverting Gradients (IG)~\citep{inverting_gradients}, see Figure~\ref{fig:rec_error}. 
\begin{figure}[t!]
\centering
\includegraphics[width=0.8\linewidth]{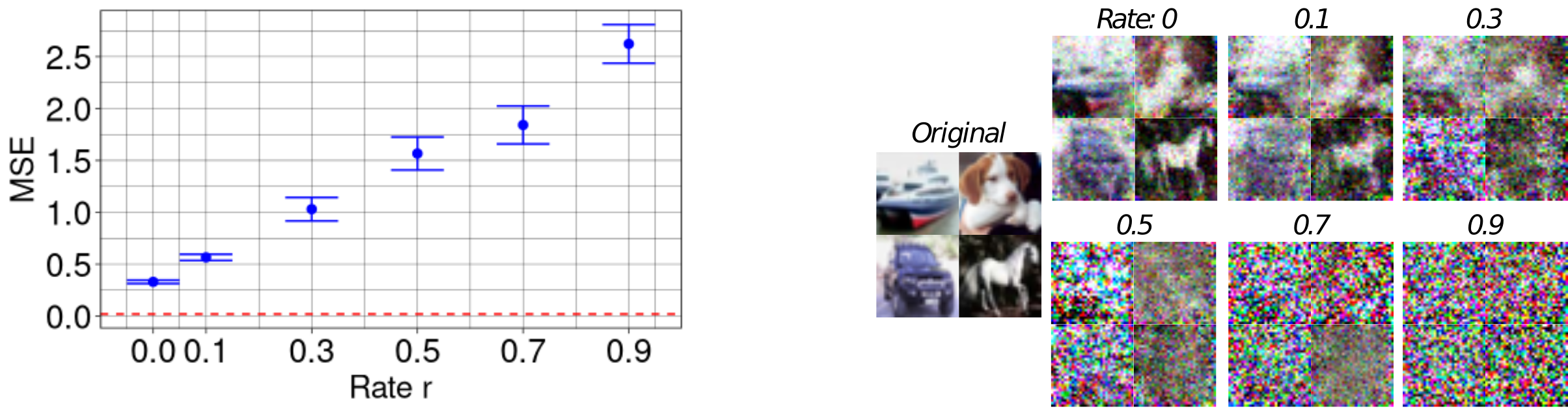}
\caption{MSE of reconstruction using IG~\citep{inverting_gradients} vs. sparsification rate $r$. In the non-private learning scheme, the MSE is $0.02\pm0.02$ represented by the red dashed line. The victim's network is ResNet34 and added noise is sampled from $\mathcal{N}(0, 10^{-3})$. The results are computed on 200 images from CIFAR10, four example images are: ship, dog, car, horse.}
\label{fig:rec_error}
\end{figure}
\section{Discussion and conclusion}
\label{sec:discussion}
During DP training, it is possible to save the perturbed gradients and accordingly find the coordinates whose gradients were small and possibly continue being insignificant in subsequent iterations. One might therefore expect better performance from a ranked sparsification algorithm, which for example ranks the mean of the perturbed gradients of last epoch and sparsifies accordingly. However, we find that due to the dominant noise, such ranking is in fact random, while on the other hand the resulting random sparsification still improves the performance. Similarly, other methods that select unimportant coordinates more precisely, e.g.\ DP selection or methods with public datasets, also cannot fully eliminate randomness. However, such random sparsification may still be beneficial for DP-SGD, which was not realized previously. More discussion are provided in Appendix~\ref{app:ranked_freeze}.

\textbf{Scope and Limitations}:
In this work, we present a theoretical analysis of DP-SGD with random sparsification, uncovering a special trade-off between internal components in the convergence bound not observed in other popular SGD schemes. Our findings are supported by empirical evidence. By employing the proposed efficient and lightweight RS approach, we successfully enhance the performance of baseline models across various realistic settings. Additionally, the sparse gradients generated offer further advantages for addressing key challenges in private learning. {However, our analyses do not guarantee that RS always converges faster than non-sparsified approaches. In certain scenarios, such as DP transfer learning, RS may be less efficient, as discussed in the \Cref{app:transfer}. Developing a lower bound analysis of the convergence rate can also be interesting in terms of the uniqueness of the trade-off, which we leave for future work. Despite these limitations, we believe our insights on the intriguing interaction between RS and DP-SGD hold significant implications for the broader research community and can inspire further investigation of DP-SGD.}

{It is important to note that DP only reduces, rather than eliminates, the statistical dependency between its input and output. To ensure robust privacy protection, other techniques like cryptographic methods, including homomorphic encryption and secure multi-party computation, should be considered based on specific requirements and expectations for each individual application and policy context.}


\subsubsection*{Acknowledgments}
This research received funding from
the Flemish Government (AI Research Program) and the
Research Foundation - Flanders (FWO) through project
number G0G2921N.

\bibliography{example_paper}
\bibliographystyle{tmlr}

\appendix
\clearpage
\newpage
\section*{Appendix}
In this appendix we provide full statements and proofs of our analyses (Appendix~\ref{app:noisyrs}-\ref{app:security}). The pseudo code for noisy SGD, SGD with gradient clipping and DP-SGD with ranked sparsification are presented in Appendix~\ref{app:algorithms}. Experimental results to verify the approximately symmetric property of the gradient deviation distribution $p(\xi_t)$ are demonstrated in Appendix~\ref{app:distribution}. More insights of gradual cooling are presented in Appendix~\ref{app:gradual_cooling}. Scaling rule for efficient hyperparameter tuning is discussed in Appendix~\ref{app:scaling_rule}. More details of DP-SGD with ranked sparsifcation and connection between RS and selective sparsification or compression methods are given in Appendix~\ref{app:ranked_freeze}. Limitations of applying RS to DP transfer learning are discussed in \Cref{app:transfer}. Comparison between Poisson sampling and random shuffle is given in \Cref{app:sample}.
\subsection*{Contents}
\startcontents[sections]
\printcontents[sections]{l}{1}{\setcounter{tocdepth}{2}}

\newpage

\section{Proof of applying random sparsification to noisy SGD}
\label{app:noisyrs}
\begin{lemma}
\label{lem:lemma1}
$u,\,v \in\R^d$ are two arbitrary vectors, $m \in \{0, 1\}^d$ is a random mask with sparsification rate $r$. We have the following expectation, $\forall\,k \in \mathbb{Z}^+$,
\begin{equation}
    \label{eq:lemma1}
    \E [\langle m^k\odot u, v\rangle] = (1-r) \E [\langle u, v\rangle].
\end{equation}
\end{lemma}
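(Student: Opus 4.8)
The plan is to compute the expectation by linearity, reducing the inner product to a sum over coordinates and exploiting the fact that the mask $m$ is independent of $u$ and $v$. First I would write $\langle m^k \odot u, v\rangle = \sum_{j=1}^d m_j^k u_j v_j$, where $m_j \in \{0,1\}$ so that $m_j^k = m_j$ for every positive integer $k$; this already trivializes the role of the exponent. Taking expectations and using linearity gives $\E[\langle m^k \odot u, v\rangle] = \sum_{j=1}^d \E[m_j u_j v_j]$.

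The key step is then to invoke independence of $m$ from $(u,v)$ to factor each term as $\E[m_j u_j v_j] = \E[m_j]\,\E[u_j v_j]$. (If $u,v$ are treated as fixed arbitrary vectors rather than random, this is immediate without any independence assumption; the statement as phrased in the paper allows either reading, and the argument is the same.) It remains to identify $\E[m_j]$: since $m$ is a uniformly random $\{0,1\}^d$ vector subject to $\sum m = (1-r)d$, each coordinate is a Bernoulli marginal with $\Pr[m_j = 1] = (1-r)d / d = 1-r$ by symmetry over coordinates. Hence $\E[m_j] = 1-r$ for every $j$.

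Substituting back, $\E[\langle m^k \odot u, v\rangle] = (1-r)\sum_{j=1}^d \E[u_j v_j] = (1-r)\,\E[\langle u, v\rangle]$, which is the claimed identity \eqref{eq:lemma1}. The only mild subtlety — which I would state explicitly — is the computation of the marginal $\Pr[m_j = 1]$: one can argue it by a counting/symmetry argument (the number of masks with $m_j = 1$ is $\binom{d-1}{(1-r)d - 1}$ out of $\binom{d}{(1-r)d}$ total, whose ratio is $(1-r)d/d$), or simply by noting that $\E[\sum_j m_j] = (1-r)d$ combined with exchangeability of the coordinates forces each $\E[m_j]$ to equal the common value $(1-r)$. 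There is no real obstacle here; the lemma is essentially a bookkeeping identity, and the main thing to get right is making the independence hypothesis and the mask distribution precise enough that the coordinate-wise factorization is justified.
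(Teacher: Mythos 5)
Your proposal is correct and follows essentially the same route as the paper's proof: reduce $m^k\odot u$ to $m\odot u$ since the mask entries are $0/1$, expand the inner product coordinatewise, factor by independence, and use $\E[m_j]=1-r$. The only difference is that you spell out the counting/exchangeability argument for the marginal $\Pr[m_j=1]$, which the paper leaves implicit.
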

\begin{proof}
\begin{align}
    \E [\langle m^k\odot u, v\rangle] &= \E[\langle m\odot u, v\rangle]\\
    &= \sum_i \E[m_i]\E[u_iv_i]\\
    &= (1-r)\E[\langle u, v\rangle].
\end{align}
\end{proof}
\begin{corollary}
\label{cor:corollary1}
Following the notations of Lemma~\ref{lem:lemma1}, we have the expectation below:
\begin{equation}
    \label{eq:corollary1}
    \E [\langle m\odot u, m\odot v\rangle] = (1-r) \E [\langle u, v\rangle].
\end{equation}
\end{corollary}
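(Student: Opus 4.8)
The plan is to reduce the statement to Lemma~\ref{lem:lemma1} by exploiting the idempotence of a $\{0,1\}$-valued mask. First I would observe that since each coordinate satisfies $m_i \in \{0,1\}$, we have $m_i^2 = m_i$ for every $i$, hence the Hadamard square obeys $m \odot m = m$. Consequently, for every realization of $m$,
\begin{equation}
    \langle m\odot u, m\odot v\rangle = \sum_i m_i^2 u_i v_i = \sum_i m_i u_i v_i = \langle m\odot u, v\rangle,
\end{equation}
a pathwise identity that therefore also holds after taking expectations.

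Next I would apply Lemma~\ref{lem:lemma1} (specialized to $k=1$, or equivalently using that $m^k=m$ for a binary mask) to the right-hand side, which gives $\E[\langle m\odot u, v\rangle] = (1-r)\E[\langle u, v\rangle]$. Chaining the two steps yields the corollary. The independence of $m$ from $u,v$ is used only inside Lemma~\ref{lem:lemma1}, through the factorization $\E[m_i u_i v_i] = \E[m_i]\,\E[u_i v_i]$, and the constraint $\sum m = (1-r)d$ enters only via $\E[m_i] = 1-r$, which follows by symmetry of the uniform draw over masks with a fixed number of ones.

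There is essentially no real obstacle here; the only point requiring care is the order of operations. Because $u$ and $v$ may themselves be random (the statement writes $\E[\langle u,v\rangle]$ rather than $\langle u,v\rangle$), the idempotence identity $m\odot m = m$ must be invoked pathwise, before any expectation is taken, and the factorization step must be justified by independence of $m$ from the pair $(u,v)$ rather than from $u$ and $v$ separately. Once that ordering is respected, the result is immediate.
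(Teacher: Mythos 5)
Your proof is correct and follows essentially the same route as the paper: both reduce the claim to Lemma~\ref{lem:lemma1} via the idempotence $m_i^2=m_i$ of the binary mask (the paper writes $\langle m\odot u, m\odot v\rangle=\langle m^2\odot u,v\rangle$ and invokes the lemma with $k=2$, which is the same step). Your added remarks on applying the identity pathwise and on independence from the pair $(u,v)$ are accurate refinements, not a different argument.
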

\begin{proof}
\begin{align}
    \E [\langle m\odot u, m\odot v\rangle] &= \E [\langle m^2\odot u, v\rangle]\\
    &\overset{\ref{eq:lemma1}}{=} (1-r) \E [\langle u, v\rangle].
\end{align}
\end{proof}
\subsection{Proof of Theorem~\ref{prop:perturbed_opti}}
\begin{proof}
From the $G$-Lipschitz smoothness, and noting that Gaussian noise is added to the aggregated gradient and will be divided by batch-size $B$ before updating the model, we have:
\begin{align}
    \mathcal{L}_{t+1} &\leq \mathcal{L}_t + \langle\nabla w_t, w_{t+1} - w_t\rangle + \frac{G}{2}\|w_{t+1} - w_t\|^2\\
    &= \mathcal{L}_t - \gamma\langle\nabla w_t, m\odot(\nabla w_t + \frac{1}{B}(\sum_i\xi_{t,i}+\xi_\sigma))\rangle + \frac{G\gamma^2}{2}\|m\odot(\nabla w_t + \frac{1}{B}(\sum_i\xi_{t,i}+\xi_\sigma))\|^2 .
\end{align}
Taking expectations on both sides and rearranging:
\begin{equation}
    \begin{split}
            &\gamma\E[\langle\nabla w_t, m\odot(\nabla w_t + \frac{1}{B}(\sum_i\xi_{t,i}+\xi_\sigma))\rangle] \\
    &\leq \E[\mathcal{L}_t - \mathcal{L}_{t+1}] + \frac{G\gamma^2}{2}\E[\|m\odot(\nabla w_t + \frac{1}{B}(\sum_i\xi_{t,i}+\xi_\sigma))\|^2],
    \end{split}
\end{equation}
from which we can obtain:
\begin{align}
    (1-r)\gamma\|\nabla w_t\|^2 &\overset{\ref{eq:lemma1}}{\leq} \E[\mathcal{L}_t - \mathcal{L}_{t+1}] + \frac{(1-r)G\gamma^2}{2}(\|\nabla w_t\|^2 + \E[\|\frac{1}{B}\sum_i\xi_{t,i}\|^2]+\frac{d\sigma^2}{B^2})\\
    &\leq \E[\mathcal{L}_t - \mathcal{L}_{t+1}] + \frac{(1-r)G\gamma^2}{2}(\|\nabla w_t\|^2 + \frac{\sigma^2_g}{B}+\frac{d\sigma^2}{B^2}).
\end{align}
{We apply the same learning rate $\gamma=\frac{1}{G\sqrt{T}}$ as in \citet{signsgd-bernstein18a}, with which we can derive better convergence under large noise.} Combining $\|\nabla w_t\|^2$, we have:
\begin{align}
    \label{eq:hvde}
    \frac{1-r}{2G\sqrt{T}}\|\nabla w_t\|^2\leq(1-r)(\frac{1}{G\sqrt{T}} - \frac{1}{2GT})\|\nabla w_t\|^2 &\leq \E[\mathcal{L}_t - \mathcal{L}_{t+1}] + \frac{1-r}{2GT}(\frac{\sigma_g^2}{B} + \frac{d\sigma^2}{B^2}) .
\end{align}
Moving all the coefficients to the r.h.s.:
\begin{align}
    \|\nabla w_t\|^2 &\leq \frac{2G\sqrt{T}}{1-r}\E[\mathcal{L}_t - \mathcal{L}_{t+1}] + \frac{1}{\sqrt{T}}(\frac{\sigma_g^2}{B} + \frac{d\sigma^2}{B^2}).
\end{align}
It is worth emphasizing that the factor $1-r$ on the r.h.s\ of \Cref{eq:hvde} is canceled out by the same factor from the l.h.s. Consider the average over $T$ steps:
\begin{align}
    \frac{1}{T}\sum_t \|\nabla w_t\|^2 &\leq \frac{2G}{(1-r)\sqrt{T}}\sum_t\E[\mathcal{L}_t - \mathcal{L}_{t+1}] + \frac{1}{T\sqrt{T}}\sum_t(\frac{\sigma_g^2}{B} + \frac{d\sigma^2}{B^2})\\
    &= \frac{2G}{(1-r)\sqrt{T}}\E[\mathcal{L}_1 - \mathcal{L}_{T+1}] + \frac{1}{\sqrt{T}}(\frac{\sigma_g^2}{B} + \frac{d\sigma^2}{B^2})\\
    &\leq \frac{2G}{(1-r)\sqrt{T}}\E[\mathcal{L}_1] - \min_w\mathcal{L}(w)] + \frac{1}{\sqrt{T}}(\frac{\sigma_g^2}{B} + \frac{d\sigma^2}{B^2}).
\end{align}
\end{proof}
\section{Proof of applying random sparsification SGD with gradient clipping}
\label{app:cliprs}
\begin{lemma}
\label{lem:lemma2}
$u \in \R^d$ is an arbitrary vector, $m\in\{0, 1\}^d$ is a random mask with sparsification rate $r$. We have the following inequality:
\begin{equation}
    \label{eq:lemma2}
    \E [\|m\odot u\|] \geq (1-r) \|u\|.
\end{equation}
\end{lemma}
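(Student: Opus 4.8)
The plan is to reduce the lower bound on the \emph{expected} norm $\E[\|m\odot u\|]$ to the first‑moment identity of Lemma~\ref{lem:lemma1}. The obstacle to a naive approach is that $\sqrt{\cdot}$ is concave, so Jensen applied to $\|m\odot u\| = \sqrt{\sum_i m_i u_i^2}$ only gives $\E[\|m\odot u\|] \le \sqrt{\E[\|m\odot u\|^2]} = \sqrt{1-r}\,\|u\|$, which is the wrong direction. The fix is to ``linearize'' the norm: I would bound $\|m\odot u\|$ below by a \emph{linear} functional of the random mask $m$, whose expectation is then handled directly by Lemma~\ref{lem:lemma1}.

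Concretely, the case $u = \vzero$ is trivial, so assume $u \neq \vzero$. The key step is Cauchy--Schwarz applied to the pair $m\odot u$ and $u$:
\[
\langle m\odot u,\, u\rangle \;\le\; \|m\odot u\|\,\|u\| ,
\]
which, since $\|u\| > 0$, rearranges to the pointwise-in-$m$ estimate $\|m\odot u\| \ge \langle m\odot u, u\rangle / \|u\|$. Note the right-hand side equals $\tfrac{1}{\|u\|}\sum_i m_i u_i^2 \ge 0$, so there are no sign subtleties. Taking expectations over $m$ and applying Lemma~\ref{lem:lemma1} (identity~\ref{eq:lemma1}) with $v = u$ gives
\[
\E[\|m\odot u\|] \;\ge\; \frac{\E[\langle m\odot u, u\rangle]}{\|u\|} \;=\; \frac{(1-r)\langle u,u\rangle}{\|u\|} \;=\; (1-r)\,\|u\| ,
\]
as claimed.

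The only real content is the first inequality — using Cauchy--Schwarz to replace the norm by something linear in $m$ so that the known expectation of $m\odot u$ can be substituted. After that, the argument is a one-line computation, and the independence of $m$ from $u$ is used only implicitly through Lemma~\ref{lem:lemma1}. I do not anticipate any technical difficulty; the ``hard part'', such as it is, is simply recognizing that a direct convexity argument fails and that one should pass through the bilinear form instead.
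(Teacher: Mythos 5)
Your proof is correct and is essentially the paper's own argument: since $m_i\in\{0,1\}$, your Cauchy--Schwarz bound $\|m\odot u\|\ge \langle m\odot u,u\rangle/\|u\|$ is identical to the paper's step $\|m\odot u\|\,\|u\|\ge\|m\odot u\|^2$ (as $\langle m\odot u,u\rangle=\|m\odot u\|^2$), and invoking Lemma~\ref{lem:lemma1} with $v=u$ gives the same expectation as the paper's use of Corollary~\ref{cor:corollary1}. Your explicit handling of $u=\vzero$ is a minor point of care the paper omits.
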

\begin{proof}
\begin{align}
    \E [\|m\odot u\|] &= \frac{1}{\|u\|} \E [\|m\odot u\|\|u\|]\\
    &\geq \frac{1}{\|u\|}\E[ \|m\odot u\|^2]\\
    &\overset{\ref{eq:corollary1}}{=} \frac{1}{\|u\|}(1-r)\E[\langle u, u\rangle]\\
    &=(1-r)\|u\|.
\end{align}
Equality holds when $r=0$. 
\end{proof}
\subsection{Proof of Lemma~\ref{lem:rsclip}}
\begin{proof}
Following from the G-smoothness assumption, we have:
\begin{align}
    \mathcal{L}_{t+1} &\leq \mathcal{L}_t + \langle\nabla w_t,w_{t+1} - w_t\rangle + \frac{G}{2}\|w_{t+1} - w_t\|^2\\
    &= \mathcal{L}_t - \gamma\langle\nabla w_t, \bar{g}_t \rangle + \frac{G\gamma^2}{2}\|\bar{g}_t\|^2.
\end{align}
Taking expectations on both sides and rearranging, we have:
\begin{align}
    \E [\langle\nabla w_t, \bar{g}_t\rangle] &\leq \frac{1}{\gamma}\E[\mathcal{L}_t - \mathcal{L}_{t+1}] + \frac{G\gamma}{2}\E [\|\bar{g}_t\|^2]\\
    &\leq \frac{1}{\gamma}\E[\mathcal{L}_t - \mathcal{L}_{t+1}] + \gamma\frac{GC^2}{2},
\end{align}
\end{proof}
\subsection{Proof of \Cref{eq:dwdf} and \Cref{eq:h3ze}}
Similar to Lemma~\ref{lem:rsclip}, consider gradient sparsification then clipping, we can obtain:
\begin{equation}
    \E [\langle\nabla w_t, \hat{g}_t\rangle] \leq \frac{1}{\gamma}\E[\mathcal{L}_t - \mathcal{L}_{t+1}] + \gamma\frac{GC^2}{2}.
\end{equation}
Now focusing on the l.h.s.\ we have:
\begin{align}
    \E [\langle\nabla w_t, \hat{g}_t\rangle] &= \langle \nabla w_t, \E[\hat{g}_t]\rangle\\
    &= \langle \nabla w_t, \frac{1}{B}\sum_i \E[m\odot g_{t,i}\cdot\min (1, \frac{C}{\| m\odot g_{t,i}\|})]\rangle\\
    &= \E[\langle \nabla w_t, m\odot g_{t,i}\cdot\min (1, \frac{C}{\| m\odot g_{t,i}\|})\rangle]\\
    &= \E[\langle m\odot\nabla w_t, m\odot g_{t,i}\cdot\min (1, \frac{C}{\| m\odot g_{t,i}\|})\rangle]\\
    \label{eq:proof2_add2}
    &= \E_m\left[\E_{\xi_t}[\langle\nabla w_t', g_{t,i}'\cdot\min(1, \frac{C}{\|g_{t,i}'\|})\rangle]\right]\\
    \label{eq:proof2_4}
    &= \E_m\left[\E_{\xi_t'\sim \Tilde{p}_t'}[\langle\nabla w_t', g_{t,i}'\cdot\min(1, \frac{C}{\|g_{t,i}'\|})\rangle]\right] + \E_m[b_t'],
\end{align}
where we define $\nabla w_t' := m\odot \nabla w_t$, $\xi_t' := m\odot\xi_t$ with corresponding true distribution $p_t'$ and proxy $\Tilde{p}_t'$ which are projected from $p_t$ and $\Tilde{p}_t$, $b_t' := \E_{\xi_t'\sim p_t'} [\langle\nabla w',\hat{g}_t\rangle] - \E_{\xi_t'\sim \Tilde{p}_t'} [\langle\nabla w',\hat{g}_t\rangle]$. \Cref{eq:proof2_add2} holds because $\xi_t$ and $m$ are independent. Since the projection of a symmetric distribution to a subspace is symmetric, i.e. $\Tilde{p}_t'(\xi_t') = \Tilde{p}_t'(-\xi_t')$. We can adapt Theorem~\ref{thm:chen} for the first term:
\begin{equation}
    \label{eq:proof2_5}
    \E_m\left[\E_{\xi_t'\sim \Tilde{p}_t'}[\langle\nabla w_t', g_{t,i}'\cdot\min(1, \frac{C}{\|g_{t,i}'\|})\rangle]\right]
    \geq \E_m[P_{\xi_t'\sim\Tilde{p}_t'}(\|\xi_t'\| < 3C/4)h(\nabla w_t')\|\nabla w_t'\|].
\end{equation}
\subsection{Proof of Lemma~\ref{lem:bridge}}
\begin{proof}
Consider the following two cases:
\textit{case 1}: $\|\nabla w_t\| \leq C/4$, then $P_m(\|\nabla w_t'\| \leq C/4) = 1$,
\begin{align}
    \E_m[P_{\xi_t'\sim\Tilde{p}_t'}(\|\xi_t'\| < 3C/4)h(\nabla w_t')\|\nabla w_t'\|] &\geq P_{\xi_t\sim\Tilde{p}_t}(\|\xi_t\| < 3C/4)\E_m[h(\nabla w_t')\|\nabla w_t'\|]\\
    &= P_{\xi_t\sim\Tilde{p}_t}(\|\xi_t\| < 3C/4)\E_m[\|\nabla w_t'\|^2]\\
    \label{eq:proof2_add}
    &\overset{\ref{eq:corollary1}}{=}P_{\xi_t\sim\Tilde{p}_t}(\|\xi_t\| < 3C/4)(1-r)\|\nabla w_t\|^2\\
    &=(1-r)P_{\xi_t\sim\Tilde{p}_t}(\|\xi_t\| < 3C/4)h(\nabla w_t)\|\nabla w_t\|.
\end{align}
\textit{case 2}: $\|\nabla w_t\| > C/4$, consider two events $A_1$: $\|\nabla w_t'\| \leq C/4$; $A_2$: $\|\nabla w_t'\| > C/4$ then we have:
\begin{align}
    \E_m[P_{\xi_t'\sim\Tilde{p}_t'}(\|\xi_t'\| < 3C/4)h(\nabla w_t')\|\nabla w_t'\|] &\geq P_{\xi_t\sim\Tilde{p}_t}(\|\xi_t\| < 3C/4)\E_m[h(\nabla w_t')\|\nabla w_t'\|]\\
    &\overset{\ref{eq:proof2_add}}{\geq} P_{\xi_t\sim\Tilde{p}_t}(\|\xi_t\| < 3C/4)(P(A_1)(1-r)\|\nabla w_t\|^2 + \\
    \nonumber
    &\quad P(A_2)C/4\cdot\E_m[\|\nabla w_t'\|])\\
    &\overset{\ref{eq:lemma2}}{\geq} P_{\xi_t\sim\Tilde{p}_t}(\|\xi_t\| < 3C/4)(P(A_1)(1-r)\|\nabla w_t\|^2 + \\
    \nonumber
    &\quad P(A_2)\frac{C}{4}(1-r)\|\nabla w_t\|)\\
    &\geq P_{\xi_t\sim\Tilde{p}_t}(\|\xi_t\| < 3C/4)(1-r)\frac{C}{4}\|\nabla w_t\|\\
    &= (1-r)P_{\xi_t\sim\Tilde{p}_t}(\|\xi_t\| < 3C/4)h(\nabla w_t)\|\nabla w_t\|,
\end{align}
from which we conclude that:
\begin{align}
\label{eq:proof2_7}
    \E_m[P_{\xi_t'\sim\Tilde{p}_t'}(\|\xi_t'\| < 3C/4)h(\nabla w_t')\|\nabla w_t'\|] \geq (1-r)P_{\xi_t\sim\Tilde{p}_t}(\|\xi_t\| < 3C/4)h(\nabla w_t)\|\nabla w_t\|.
\end{align}
The inequality above implies that $\exists\, \kappa_t \geq 1-r$ such that:
\begin{equation}
    \E_m[P_{\xi_t'\sim\Tilde{p}_t'}(\|\xi_t'\| < 3C/4)h(\nabla w_t')\|\nabla w_t'\|] = \kappa_t P_{\xi_t\sim\Tilde{p}_t}(\|\xi_t\| < 3C/4)h(\nabla w_t)\|\nabla w_t\|.
\end{equation}
We note that $\kappa_t =1$ if $r=0$ and under natural assumption, e.g. $P_{\xi_t'\sim\Tilde{p}_t'}(\|\xi_t'\| < 3C/4) > P_{\xi_t\sim\Tilde{p}_t}(\|\xi_t\| < 3C/4)$, we have $\kappa_t > 1-r $ if $r > 0$.

Now we form the lower bound for $\kappa_t$ using the condition $P_{\xi_t\sim\Tilde{p}_t}(\|\xi_t\| < 3C/4) \geq \sqrt{1-r}$. We emphasize that as the network converges during training, the following property will be satisfied with the true probability distribution $p_t$:
\begin{equation}
    \label{eq:kappa0}
    P_{\xi_t\sim p_t}(\|\xi_t\|< 3C/4) \geq \sqrt{1-r}.
\end{equation}
So the condition of choosing $\Tilde{p}_t$ will not lead to significant bias term $b_t$.
Based on the condition, we have:
\begin{align}
    \label{eq:kappa1}
    P_{\xi_t\sim\Tilde{p}_t}(\|\xi_t\|<3C/4)h(\nabla w)\|\nabla w\| \geq \sqrt{1-r} h(\nabla w)\|\nabla w\|.
\end{align}
Consider two cases:

\textit{case 1}: $\|\nabla w\| \geq C/4$, r.h.s.\ of \Cref{eq:kappa1} has:
\begin{align}
    \sqrt{1-r} h(\nabla w)\|\nabla w\| 
    &\geq h(\nabla w)((1-r)\|\nabla w\|^2)^{1/2}\\
    &\overset{\ref{eq:corollary1}}{=} h(\nabla w)(\E_m [\|\nabla w'\|^2])^{1/2}\\
    \label{eq:kappa2}
    &\geq h(\nabla w)\E_m [\|\nabla w'\|]\\
    &= \E_m [h(\nabla w)\|\nabla w'\|]\\
    &\geq \E_m[h(\nabla w')\|\nabla w'\|],
\end{align}
where \Cref{eq:kappa2} is taken according to Jensens' inequality.

\textit{case 2}: $\|\nabla w\| \leq C/4$, so we have $\|\nabla w'\| \leq C/4$, then:
\begin{align}
    \sqrt{1-r} h(\nabla w)\|\nabla w\| 
    &= \sqrt{1-r}\|\nabla w\|^2\\
    &\geq (1-r)\|\nabla w\|^2\\
    &\overset{\ref{eq:corollary1}}{=} \E_m[\|\nabla w'\|^2]\\
    &=\E_m[h(\nabla w')\|\nabla w'\|].
\end{align}
Combing two cases, we can obtain the following property:
\begin{align}
    P_{\xi_t\sim\Tilde{p}_t}(\|\xi_t\|<3C/4)h(\nabla w) \|\nabla w\|
    &\geq \sqrt{1-r}h(\nabla w)\|\nabla w\|\\
    &\geq \E_m[h(\nabla w')\|\nabla w'\|]\\
    \label{eq:kappa3}
    &\geq \E_m[P_{\xi_t'\sim\Tilde{p}_t'}(\|\xi_t'\|< 3C/4)h(\nabla w')\|\nabla w'\|].
\end{align}
It is worth noting that the above inequality can also be obtained under other conditions, the one we provided, i.e. $P_{\xi_t\sim\Tilde{p}_t}(\|\xi_t\|< 3C/4)\geq \sqrt{1-r}$, is a sufficient but not necessary condition, which will be definitely satisfied as the network converges, i.e. $\|\nabla w\|\to 0$, $\|\xi_t\| \to 0$.

Combining \Cref{eq:kappa3} with \Cref{eq:proof2_7}, we obtain: $\exists\, \kappa_t\in(1-r, 1)$,
\begin{align}
     \E_m[P_{\xi_t'\sim\Tilde{p}_t'}(\|\xi_t'\| < 3C/4)h(\nabla w_t')\|\nabla w_t'\|] = \kappa_t P_{\xi_t\sim\Tilde{p}_t}(\|\xi_t\| < 3C/4)h(\nabla w_t)\|\nabla w_t\|.
\end{align}
$\kappa_t$ takes 1 when $r=0$.
\end{proof}
\section{Proof of applying random sparsification to DP-SGD}
\label{app:dprs}
\subsection{Proof of Lemma~\ref{lem:dpsgd}}
\begin{proof}
Following from the smoothness assumption, we have:
\begin{align}
    \mathcal{L}_{t+1} &\leq \mathcal{L}_t + \langle\nabla w_t,w_{t+1} - w_t\rangle + \frac{G}{2}\|w_{t+1} - w_t\|^2\\
    &= \mathcal{L}_t - \gamma\langle\nabla w_t, \hat{g}_t + m\odot\xi_{DP}\rangle + \frac{G\gamma^2}{2}\|\hat{g}_t+m\odot\xi_{DP}\|^2\\
    &= \mathcal{L}_t - \gamma\langle\nabla w_t, \hat{g}_t\rangle - \gamma\langle
    \nabla w_t, m\odot\xi_{DP}\rangle + \frac{G\gamma^2}{2}\|\hat{g}_t+m\odot\xi_{DP}\|^2,
\end{align}
taking expectations on both sides and rearranging, we have:
\begin{align}
    \label{eq:proof2_add3}
    \E [\langle\nabla w_t, \hat{g}_t\rangle] &\leq \frac{1}{\gamma}\E[\mathcal{L}_t - \mathcal{L}_{t+1}] - \E[\langle\nabla w_t, m\odot\xi_{DP}\rangle] + \frac{G\gamma}{2}\E [\|\hat{g}_t+m\odot\xi_{DP}\|^2]\\
    &= \frac{1}{\gamma}\E[\mathcal{L}_t - \mathcal{L}_{t+1}] - 0 + \frac{G\gamma}{2}(\E [\|\hat{g}_t\|^2] + \E [\|m\odot\xi_{DP}\|^2] - 0)\\
    &\overset{\ref{eq:corollary1}}{=} \frac{1}{\gamma}\E[\mathcal{L}_t - \mathcal{L}_{t+1}] + \frac{G\gamma}{2}(\E [\|\hat{g}_t\|^2] + (1-r)\frac{C^2\sigma^2d}{B^2})\\
    \label{eq:proof2_3}
    &\leq \frac{1}{\gamma}\E[\mathcal{L}_t - \mathcal{L}_{t+1}] + \frac{G\gamma}{2}(C^2 + (1-r)\frac{C^2\sigma^2d}{B^2}).
\end{align}
\end{proof}
\subsection{Proof of Theorem~\ref{prop:dprs}}
\begin{proof}
Combine \Cref{eq:dwdf}, ~\ref{eq:h3ze} and Lemma~\ref{lem:bridge}:
\begin{equation}
    \kappa_t P_{\xi_t\sim\Tilde{p}_t}(\|\xi_t\| < 3C/4)h(\nabla w_t)\|\nabla w_t\| + \E_m[b_t'] \leq \E[\langle\nabla w_t, \hat{g}_t\rangle].
\end{equation}
Substitute Lemmma~\ref{lem:dpsgd} and rearrange:
\begin{align}
    \kappa_t P_{\xi_t\sim\Tilde{p}_t}(\|\xi_t\| < 3C/4)h(\nabla w_t)\|\nabla w_t\| &\leq \E[\langle\nabla w_t, \hat{g}_t\rangle] - \E_m[b_t],\\
    \kappa_t P_{\xi_t\sim\Tilde{p}_t}(\|\xi_t\| <  3C/4)h(\nabla w_t)\|\nabla w_t\| &\leq
    \frac{1}{\gamma}\E[\mathcal{L}_t - \mathcal{L}_{t+1}] + \gamma\frac{GC^2}{2} + (1-r)\gamma\Delta_\sigma - \E_m[b_t'],\\
    P_{\xi_t\sim\Tilde{p}_t}(\|\xi_t\| < 3C/4)h(\nabla w_t)\|\nabla w_t\| &\leq
    \frac{1}{\kappa_t}(\frac{\E[\mathcal{L}_t- \mathcal{L}_{t+1}]}{\gamma} + \gamma\frac{GC^2}{2} - \E_m[b_t']) + \frac{1-r}{\kappa_t}\gamma\Delta_\sigma.
\end{align}
Since $\kappa_{1:T} \in (1-r, 1)$, then $\exists \kappa \in (1-r, 1)$, such that consider all T iterations, we have:
\begin{equation}
    \frac{1}{T}\sum_{t=1}^T P_{\xi_t\sim\Tilde{p}_t}(\|\xi_t\| < 3C/4)h(\nabla w_t)\|\nabla w_t\|\leq \frac{1}{\kappa}( \frac{\Delta_\mathcal{L}}{\gamma T} + \gamma\frac{GC^2}{2} - \frac{1}{T}\sum_{t=1}^T\E_m[b_t']) + \frac{1-r}{\kappa}\gamma\Delta_\sigma,
\end{equation}

\end{proof}
\subsection{Convergence bound with privacy budget variables}
\begin{restatable}[\citet{DP-SGD}]{thm}{dp-sgd}
There exist constants $u$ and $v$ so that give sampling probability $q=B/N$ and the number of steps $T$, for any $\varepsilon < vq^2T$, DP-SGD is $(\varepsilon, \delta)$-differentially private for any $\delta > 0$ if we choose:
\begin{equation}
    \sigma \geq u\frac{q\sqrt{T\log(1/\delta)}}{\varepsilon}.
\end{equation}
\end{restatable}
So to achieve $(\varepsilon, \delta)$-differential privacy, we choose $\sigma=u\frac{q\sqrt{T\log(1/\delta)}}{\varepsilon}$ and note that for DP-SGD, large batch is preferred in practice, the following corollary proves that the upper bound of true gradient norm $\nabla w_t$ decreases w.r.t.\  iterations $T$:
\begin{restatable}{cor}{convergence}
\label{cor:ddsgd_dp}
Follow Theorem~\ref{prop:dprs}, consider the overall $T$ iterations with learning rate $\gamma=\frac{1}{G\sqrt{T}}$, batch-size $B=\sqrt{T}$, privacy budget $(\varepsilon,\,\delta)$, we have:
\begin{align}
    \frac{1}{T}\sum_{t=1}^T P_{\xi_t\sim\Tilde{p}_t}(\|\xi_t\| < 3C/4)h(\nabla w_t)\|\nabla w_t\| \leq \frac{1}{\kappa}(\frac{\Delta_\mathcal{L}G}{\sqrt{T}} + \frac{C^2}{2\sqrt{T}} - \frac{1}{T}\sum_{t=1}^T\E_m[b_t']) + \frac{1-r}{\kappa}\Delta_{\varepsilon},
\end{align}
\end{restatable}
where we define $\Delta_\varepsilon := \frac{dC^2u^2q^2\log(1/\delta)}{2\sqrt{T}\varepsilon^2}$.
\begin{proof}
\begin{align}
    &\frac{1}{T}\sum_{t=1}^T P_{\xi_t\sim\Tilde{p}_t}(\|\xi_t\| < 3C/4)h(\nabla w_t)\|\nabla w_t\| \\
    &\leq \frac{1}{\kappa}( \frac{\Delta_\mathcal{L}}{\gamma T} + \gamma\frac{GC^2}{2} - \frac{1}{T}\sum_{t=1}^T\E_m[b_t']) + \frac{1-r}{\kappa}\gamma\Delta_\sigma\\
    \label{eq:proof2_8}
    &= \frac{1}{\kappa}(\frac{\Delta_\mathcal{L}G}{\sqrt{T}} + \frac{C^2}{2\sqrt{T}} - \frac{1}{T}\sum_{t=1}^T\E_m[b_t']) + \frac{1-r}{\kappa}\gamma\Delta_\sigma.
\end{align}
Focusing on $\gamma\Delta_\sigma$:
\begin{align}
    \gamma\Delta_\sigma &= \frac{\gamma C^2dG}{2B^2}\cdot\frac{u^2q^2T\log(1/\delta)}{\varepsilon^2}\\
    &=\frac{\gamma C^2dG}{2}\cdot\frac{u^2q^2\log(1/\delta)}{\varepsilon^2}\\
    \label{eq:proof2_9}
    &=\frac{dC^2u^2q^2\log(1/\delta)}{2\sqrt{T}\varepsilon^2},
\end{align}
Plugging \Cref{eq:proof2_9} into \Cref{eq:proof2_8}:
\begin{align}
    \frac{1}{T}\sum_{t=1}^T P_{\xi_t\sim\Tilde{p}_t}(\|\xi_t\| < 3C/4)h(\nabla w_t)\|\nabla w_t\| \leq \frac{1}{\kappa}(\frac{\Delta_\mathcal{L}G}{\sqrt{T}} + \frac{C^2}{2\sqrt{T}} - \frac{1}{T}\sum_{t=1}^T\E_m[b_t']) + \frac{1-r}{\kappa}\Delta_\varepsilon.
\end{align}
\end{proof}

\section{Proof of privacy against gradient reconstruction attacks}
\label{app:security}
\subsection{Proof of Theorem~\ref{prop:rec_error}}
\begin{proof}
Analogous to $Kx = g$ \citep[Equation (15)]{rgap}, we have the following system of linear equations:
\begin{equation}
    m\odot kx = m\odot g.
\end{equation}
Assume that $m\odot k\neq 0$ (otherwise the reconstruction will fail, i.e.\ arbitrary reconstruction error), $x$ can be estimated through least squares:
\begin{equation}
    \hat{x} = \left((m\odot k)^\top (m\odot k)\right)^{-1}(m\odot k)^\top(m\odot(g+\xi_\sigma)).
\end{equation}
The squared error of reconstruction can be expressed as:
\begin{align}
    \|x - \hat{x}\|^2 &= \|x - \left((m\odot k)^\top (m\odot k)\right)^{-1}(m\odot k)^\top(m\odot(g+\xi_\sigma))\|^2\\
    &= \|x - \left((m\odot k)^\top (m\odot k)\right)^{-1}(m\odot k)^\top(g+\xi_\sigma)\|^2\\
    &
    =\|\left((m\odot k)^\top (m\odot k)\right)^{-1}(m\odot k)^\top\xi_\sigma\|^2.
\end{align}
Take expectations on both sides:
\begin{align}
    \E[\|x - \hat{x}\|^2] &= \E[\|\left((m\odot k)^\top (m\odot k)\right)^{-1}(m\odot k)^\top\xi_\sigma\|^2\\
    &= \E[\xi_\sigma^\top\frac{m\odot k}{\|m\odot k\|^2}\frac{(m\odot k)^\top}{\|m\odot k\|^2}\xi]\\
    &= \E[\operatorname{Tr}\{\xi_\sigma\xi_\sigma^\top\frac{m\odot k}{\|m\odot k\|^2}\frac{(m\odot k)^\top}{\|m\odot k\|^2}\}]\\
    &= \operatorname{Tr}\{\E[\xi_\sigma\xi_\sigma^\top] \E[\frac{m\odot k}{\|m\odot k\|^2}\frac{(m\odot k)^\top}{\|m\odot k\|^2}]\}\\
    &= \operatorname{Tr}\{\sigma^2 \mI_d \E[\frac{m\odot k}{\|m\odot k\|^2}\frac{(m\odot k)^\top}{\|m\odot k\|^2}]\}\\
    &= \sigma^2\E[\operatorname{Tr}\{\frac{m\odot k}{\|m\odot k\|^2}\frac{(m\odot k)^\top}{\|m\odot k\|^2}\}]\\
    &= \sigma^2\E[\frac{1}{\|m\odot k\|^2}].
\end{align}
We see that as more gradients are zeroed out, the expected squared error increases. In particular, according to Jensen's inequality and the convexity of $1/\|m\odot k\|^2$:
\begin{align}
    \E[\|x - \hat{x}\|^2] &\geq \frac{\sigma^2}{\E[\|m\odot k\|^2]}\\
    &\overset{\ref{eq:corollary1}}{=} \frac{\sigma^2}{(1-r)\|k\|^2}.
\end{align}
The lower bound of expected squared error increases monotonically with increasing sparsification rate $r$. 
\end{proof}

\newpage
\section{Additional algorithms}
\label{app:algorithms}
We provide pseudo codes for Noisy SGD with RS (Algorithm~\ref{alg:noisy_sgd}), SGD with gradient clipping and RS (Algorithm~\ref{alg:rs_clip}), and DP-SGD with ranked sparsification (Algorithm~\ref{alg:ranked_freeze}).
\begin{algorithm}
 \caption{Noisy SGD with Random Sparsification}
 \label{alg:noisy_sgd}
\hspace*{\algorithmicindent}
\textbf{Input:} Initial parameters $w_0$; Epochs $E$; Batch size $B$; Sparsification rate: $r^*$; Momentum: $\mu$; Learning rate $\gamma$; Noise multiplier $\sigma$.
\begin{algorithmic}[1]
    \For{$e = 0$ to $E-1$}
        \LComment{Gradual cooling}
        \State $r(e) = r^*\cdot\frac{e}{E-1}$;
        \LComment{Generate a random mask every epoch}
        \State  $m \in \{0, 1\}^d$, s.t. $\|m\|_1 = d\cdot (1-r(e))$;
        \For{$t = 0$ to $T-1$}
            \LComment{Compute aggregated gradients}
            \State $g_{t} = \sum_i \nabla \ell(w_t, x_i)$;
            \LComment{Sparsify gradient}
            \State $g_{t}' = m\odot g_{t}$;
            \LComment{Add sparsified noise}
            \State $\hat{g}_t = \frac{1}{B} (g_t' + m\odot\mathcal{N}(0, \sigma^2\mI_d))$;
            \LComment{Update parameters}
            \State $v_{t+1} = \mu\cdot v_t + \hat{g}_t, w_{t+1} = w_t - \gamma v_{t+1}$;
        \EndFor
    \EndFor
\end{algorithmic}
\end{algorithm}
\begin{algorithm}
\caption{SGD with Gradient clipping and Random Sparsification}
\label{alg:rs_clip}
    \textbf{ Input:} Initial parameters $w_0$; Epochs $E$; Batch size $B$; Sparsification rate: $r^*$; Clipping bound: $C$; Momentum: $\mu$; Learning rate $\gamma$.
\begin{algorithmic}[1]
    \For {$e = 0$ to $E-1$}
        \LComment{Gradual cooling}
        \State $r(e) = r^*\cdot\frac{e}{E-1}$;
        \LComment{Generate a random mask every epoch}
        \State  $m \in \{0, 1\}^d$, s.t. $\|m\|_1 = d\cdot (1-r(e))$;
        \For{$t = 0$ to $T-1$}
            \LComment{For each $x_i$ in the minibatch of size $B$}
            \State $g_{t,i} = \nabla \ell(w_t, x_i)$;
            \LComment{Sparsify gradient}
            \State $g_{t,i}' = m\odot g_{t,i}$;
            \LComment{Clip each individual gradient}
            \State $\hat{g}_{t} = \frac{1}{B} \sum_i g_{t,i}'\cdot\min (1, C/\| g_{t,i}'\|)$;
            \LComment{Update parameters}
            \State $v_{t+1} = \mu\cdot v_t + \hat{g}_t, w_{t+1} = w_t - \gamma v_{t+1}$;
        \EndFor
    \EndFor
\end{algorithmic}
\end{algorithm}
\mbox{ }
\vfill
\clearpage
\newpage
\begin{algorithm}
 \caption{DP-SGD with Ranked Sparsification}
 \label{alg:ranked_freeze}
     \hspace*{\algorithmicindent}
    \textbf{Input:} Initial parameters $w_0$; Epochs $E$; Batch size $B$; Sparsification rate: $r^*$; Clipping bound: $C$; Momentum: $\mu$; Learning rate $\gamma$; Noise multiplier $\sigma$.
\begin{algorithmic}[1]
    \For{$e = 0$ to $E-1$}
        \State $z_e = \{0\}^d$;
        \LComment{Gradual Cooling}
        \State $r(e) = r^*\cdot\frac{e}{E-1}$;
        \LComment{Generate a ranked mask every epoch}
        \If{$e$ is $0$}
        \State $m = \{1\}^d$;
        \Else
        \State Sort the indices $[1,...,d]$ with respect to the magnitude of aggregated gradient of the last epoch $|z^{e-1}|$ in ascending order then set the first $d\cdot r(e)$ positions in mask $m$ to 0 and the rest to 1;
        \EndIf
        \For{$t = 0$ to $T-1$}
            \LComment{For each $x_i$ in the minibatch of size $B$}
            \State $g_{t,i} = \nabla \ell(w_t, x_i)$;
            \LComment{Sparsify gradient}
            \State $g_{t,i}' = m\odot g_{t,i}$;
            \LComment{Clip each individual gradient}
            \State $\hat{g}_{t,i} = g_{t,i}'\cdot\min (1, C/\| g_{t,i}'\|)$;
            \LComment{Add sparsified noise}
            \State $\Tilde{g}_t = \frac{1}{B} (\sum_i \hat{g}_{t,i} + m\odot\mathcal{N}(0, C^2\sigma^2\mI_d))$;
            \LComment{Update parameters}
            \State $v_{t+1} = \mu\cdot v_t + g_{t},\, w_{t+1} = w_t - \gamma v_{t+1}$;
            \LComment{Save perturbed gradients}
            \State $z^e = z^e + \Tilde{g}_t + \frac{1}{B}(1-m)\odot  \mathcal{N}(0, C^2\sigma^2\mI_d)$; \label{alg:ranked_freeze:perturbed_gradient}
        \EndFor
    \EndFor
\end{algorithmic}
\end{algorithm}
\mbox{ }
\vfill


\clearpage
\newpage
\section{Approximately symmetric distribution of the gradient deviation}
\label{app:distribution}
As the gradient deviation distribution $p(\xi_t)$ is high-dimensional, demonstrating and verifying its symmetricity is in general intractable. We therefore adopt random projection onto a 2D for the visualization, which is also adopted by~\cite{clipping_bias}. Our experiments reproduce their observation that the gradient deviation distribution approximates symmetry during training with DP-SGD (see Figure~\ref{fig:rp1} and~\ref{fig:rp2}). We verify that this property is also valid for DP-SGD with RS (see Figure~\ref{fig:rp3} and~\ref{fig:rp4}). We have repeated the experiments 10 times, all results are qualitatively the same as presented here.

\begin{figure}[h!]
    \centering
    \subfloat[\centering Epoch 0]{\includegraphics[width=0.22\textwidth]{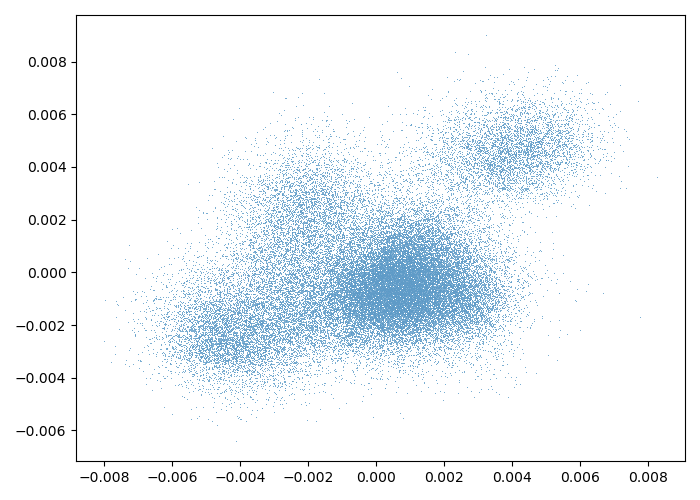}}%
    \subfloat[\centering Epoch 3]{\includegraphics[width=0.22\textwidth]{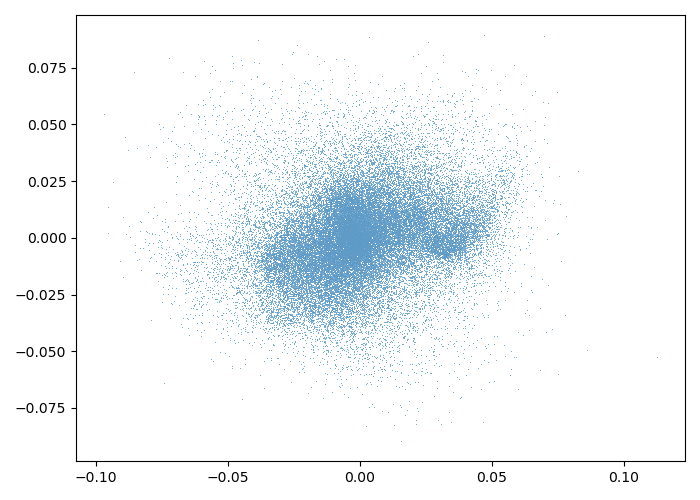}}%
    \subfloat[\centering Epoch 6]{\includegraphics[width=0.22\textwidth]{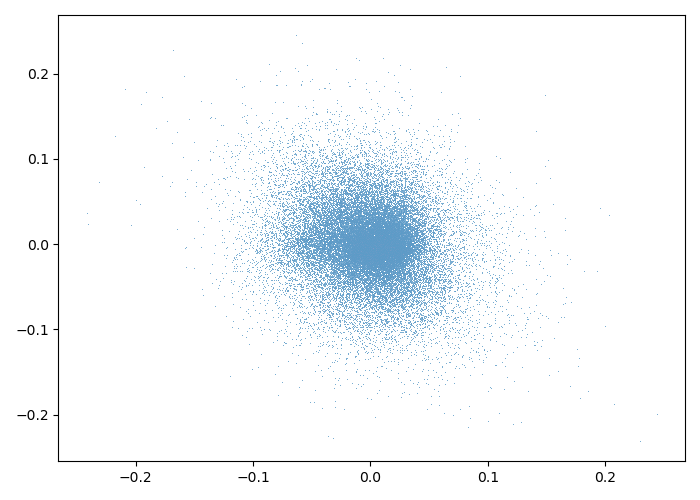}}%
    \subfloat[\centering Epoch 9]{\includegraphics[width=0.22\textwidth]{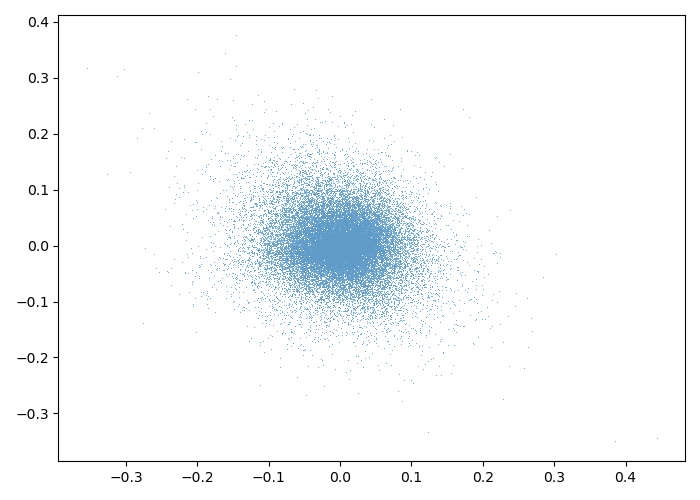}}%
    
    \subfloat[\centering Epoch 20]{\includegraphics[width=0.22\textwidth]{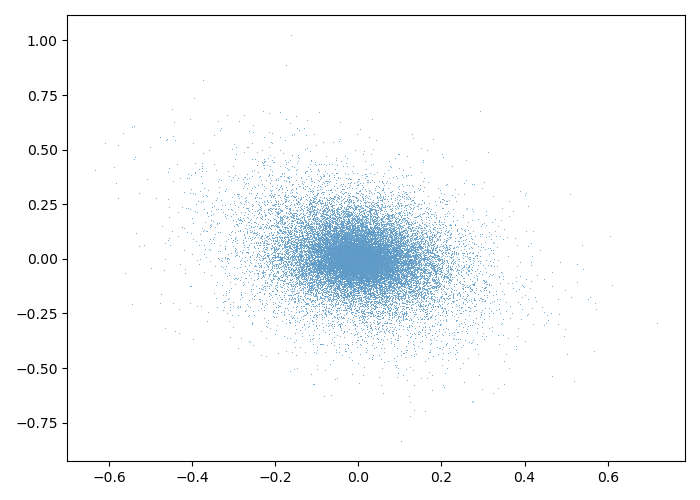}}%
    \subfloat[\centering Epoch 40]{\includegraphics[width=0.22\textwidth]{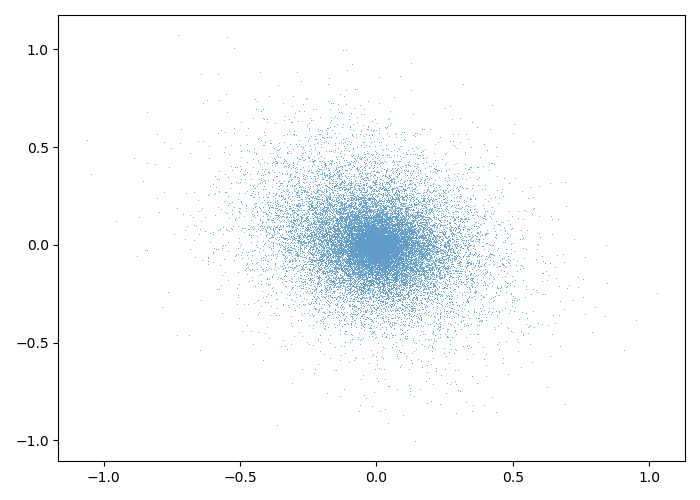}}%
    \subfloat[\centering Epoch 60]{\includegraphics[width=0.22\textwidth]{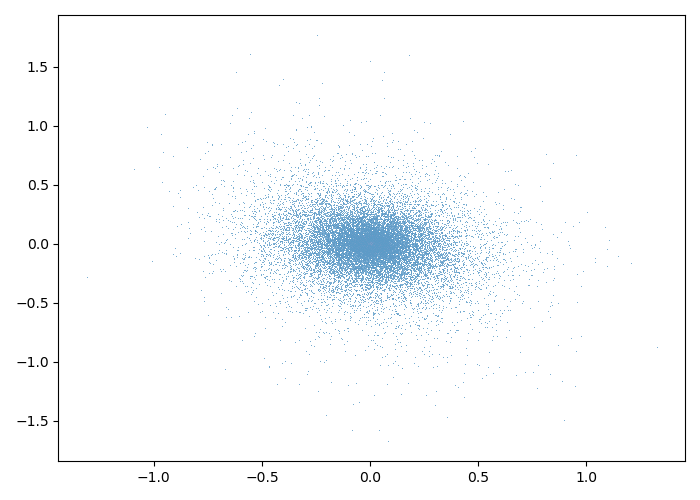}}%
    \subfloat[\centering Epoch 79]{\includegraphics[width=0.22\textwidth]{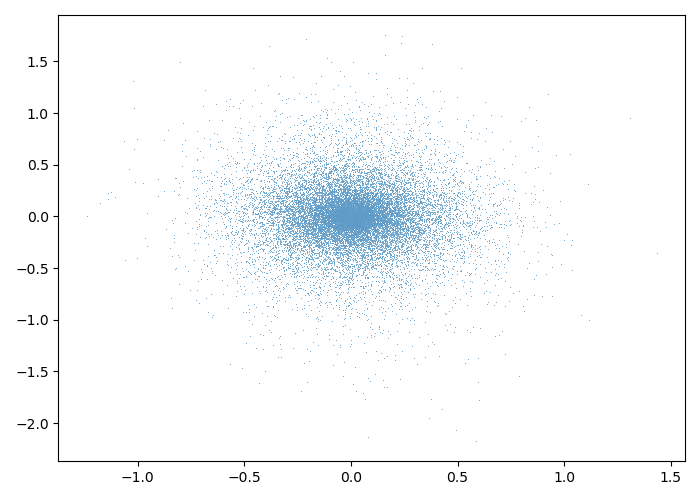}}%
    \caption{Gradient deviation distribution during training, projected onto a 2D using a random matrix. Network is DP-CNN, method is DP-SGD, dataset is CIFAR10.}
    \label{fig:rp1}
\end{figure}

\begin{figure}[h!]
    \centering
    \subfloat[\centering Epoch 0]{\includegraphics[width=0.22\textwidth]{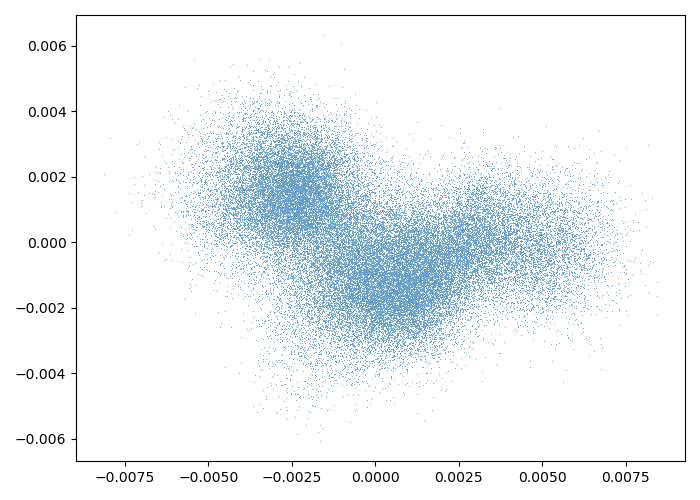}}%
    \subfloat[\centering Epoch 3]{\includegraphics[width=0.22\textwidth]{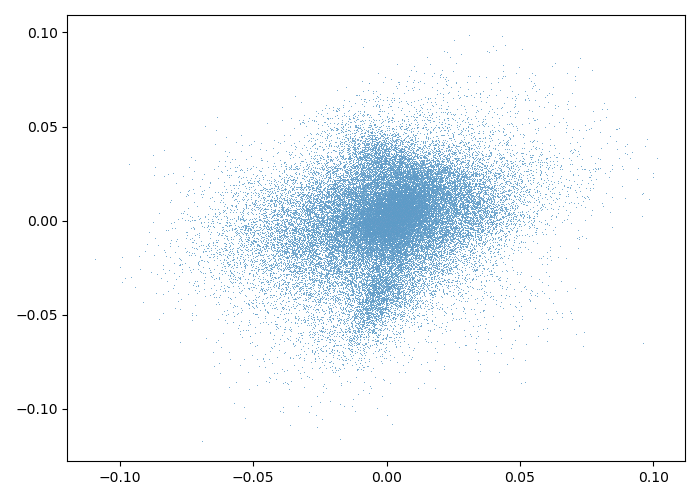}}%
    \subfloat[\centering Epoch 6]{\includegraphics[width=0.22\textwidth]{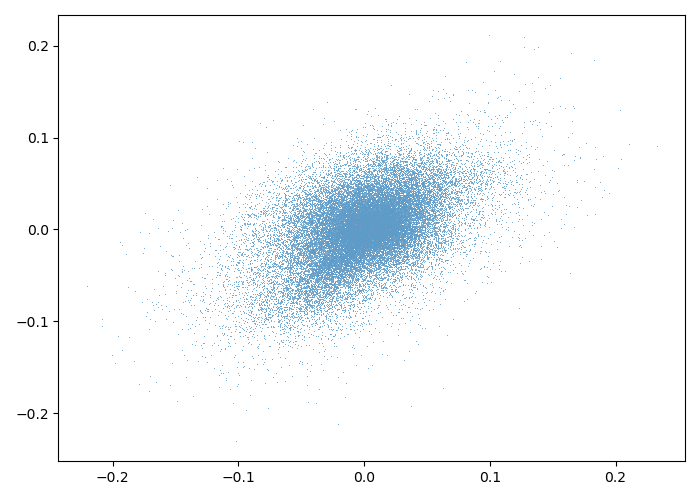}}%
    \subfloat[\centering Epoch 9]{\includegraphics[width=0.22\textwidth]{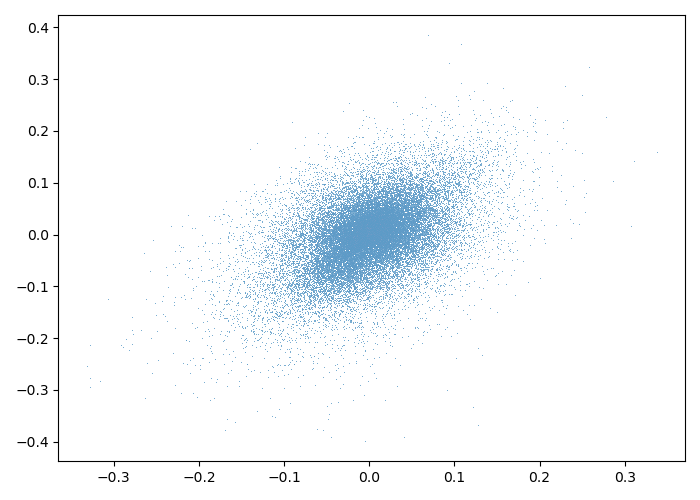}}%
    
    \subfloat[\centering Epoch 20]{\includegraphics[width=0.22\textwidth]{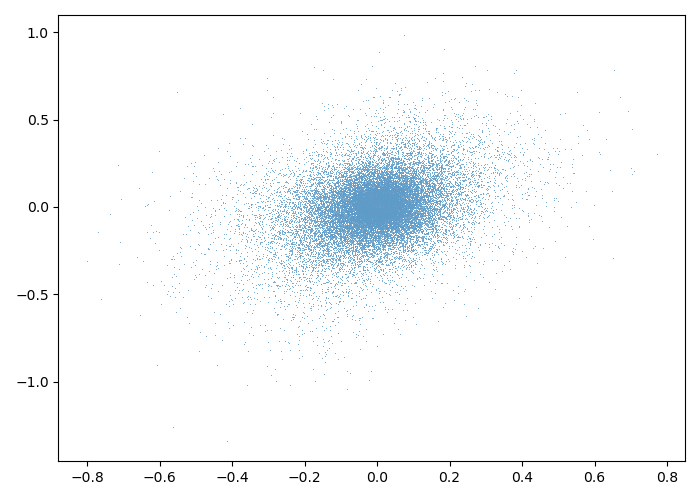}}%
    \subfloat[\centering Epoch 40]{\includegraphics[width=0.22\textwidth]{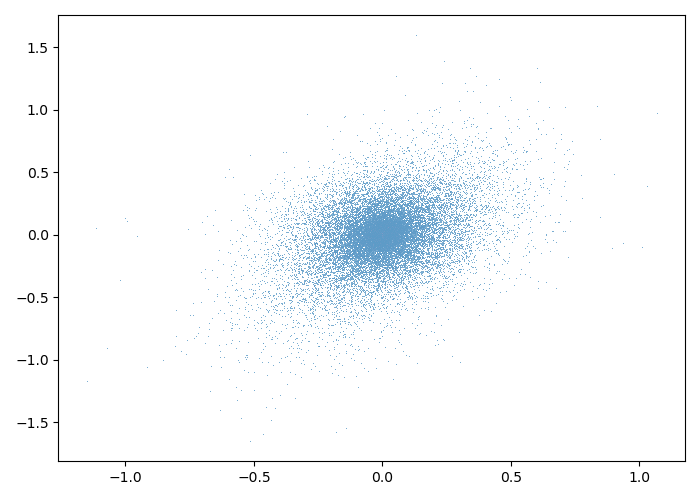}}%
    \subfloat[\centering Epoch 60]{\includegraphics[width=0.22\textwidth]{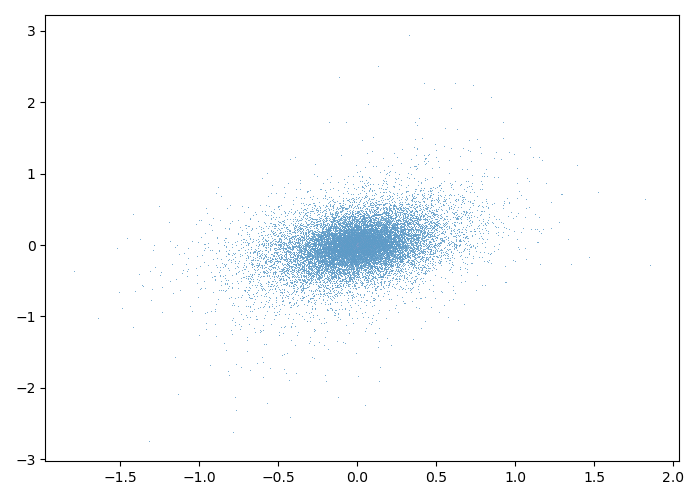}}%
    \subfloat[\centering Epoch 79]{\includegraphics[width=0.22\textwidth]{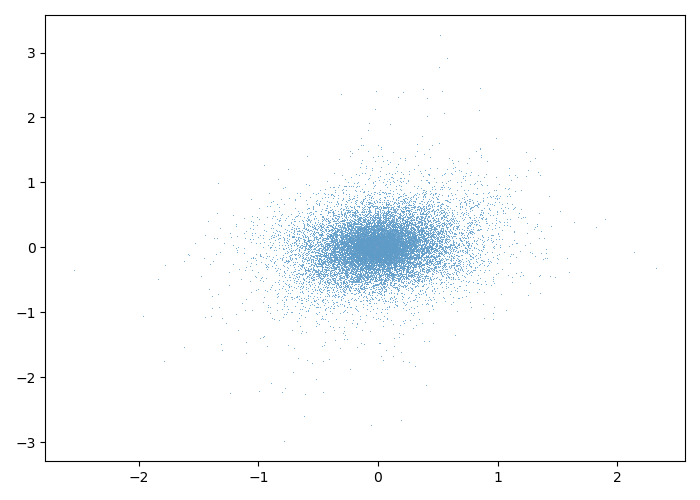}}%
    \caption{Gradient deviation distribution during training, projected onto a 2D using a random matrix. Network is DP-CNN, method is DP-SGD, dataset is CIFAR10.}
    \label{fig:rp2}
\end{figure}

\begin{figure}[h!]
    \centering
    \subfloat[\centering Epoch 0]{\includegraphics[width=0.22\textwidth]{Graphics/symmetry_distribution/dpsgd4/0.png}}%
    \subfloat[\centering Epoch 3]{\includegraphics[width=0.22\textwidth]{Graphics/symmetry_distribution/dpsgd4/3.png}}%
    \subfloat[\centering Epoch 6]{\includegraphics[width=0.22\textwidth]{Graphics/symmetry_distribution/dpsgd4/6.png}}%
    \subfloat[\centering Epoch 9]{\includegraphics[width=0.22\textwidth]{Graphics/symmetry_distribution/dpsgd4/9.png}}%
    
    \subfloat[\centering Epoch 20]{\includegraphics[width=0.22\textwidth]{Graphics/symmetry_distribution/dpsgd4/20.png}}%
    \subfloat[\centering Epoch 40]{\includegraphics[width=0.22\textwidth]{Graphics/symmetry_distribution/dpsgd4/40.png}}%
    \subfloat[\centering Epoch 60]{\includegraphics[width=0.22\textwidth]{Graphics/symmetry_distribution/dpsgd4/60.png}}%
    \subfloat[\centering Epoch 79]{\includegraphics[width=0.22\textwidth]{Graphics/symmetry_distribution/dpsgd4/79.png}}%
    \caption{Gradient deviation distribution during training, projected onto a 2D using a random matrix. Network is DP-CNN, method is DP-SGD with RS, dataset is CIFAR10.}
    \label{fig:rp3}
\end{figure}

\begin{figure}[h!]
    \centering
    \subfloat[\centering Epoch 0]{\includegraphics[width=0.22\textwidth]{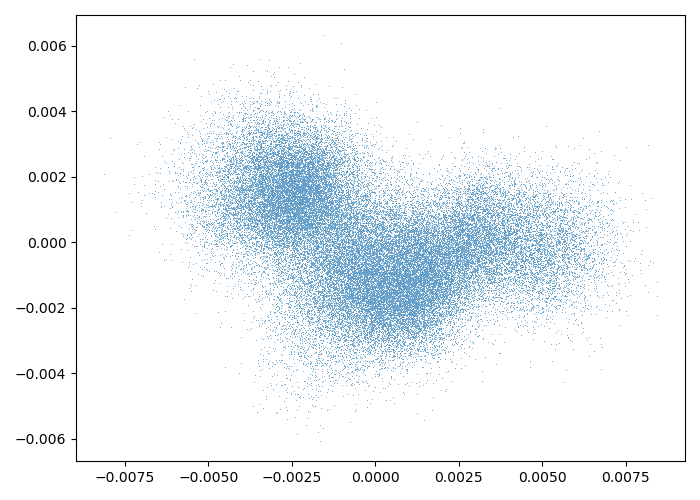}}%
    \subfloat[\centering Epoch 3]{\includegraphics[width=0.22\textwidth]{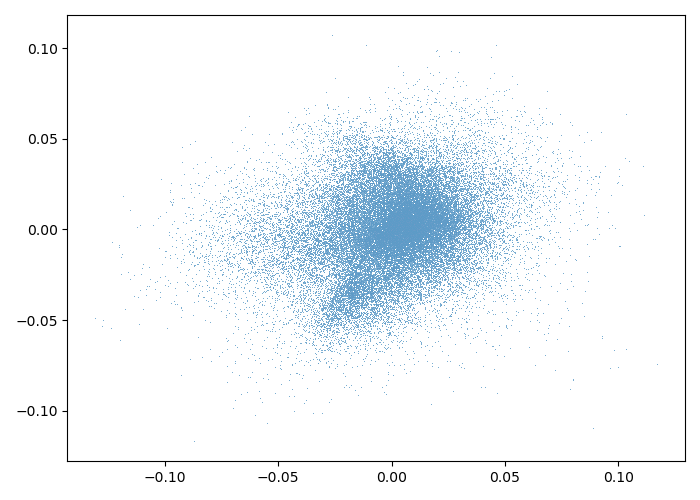}}%
    \subfloat[\centering Epoch 6]{\includegraphics[width=0.22\textwidth]{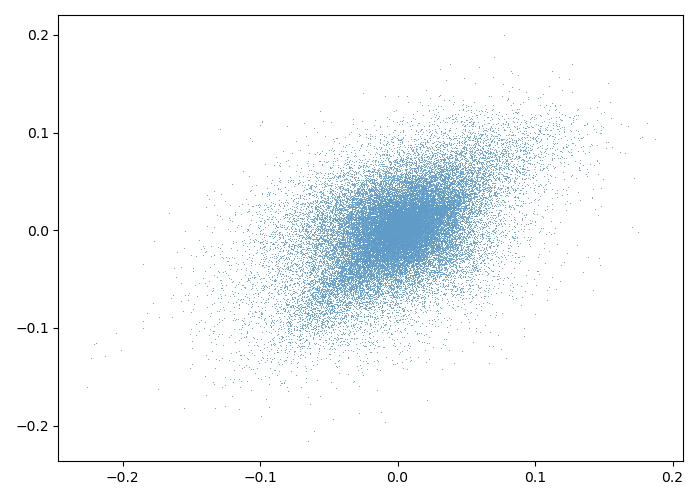}}%
    \subfloat[\centering Epoch 9]{\includegraphics[width=0.22\textwidth]{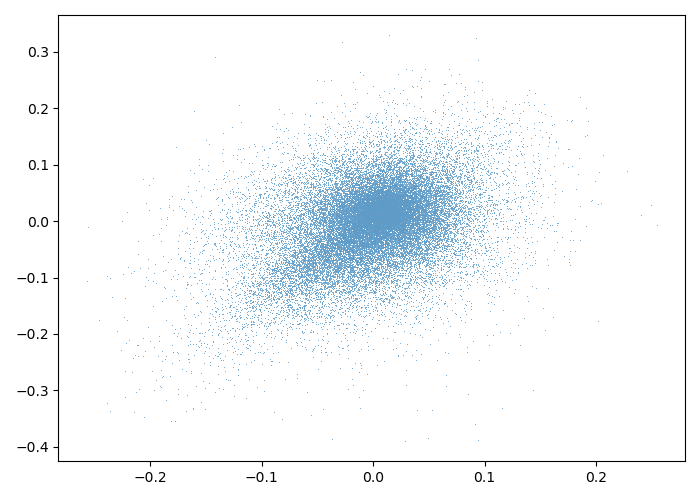}}%
    
    \subfloat[\centering Epoch 20]{\includegraphics[width=0.22\textwidth]{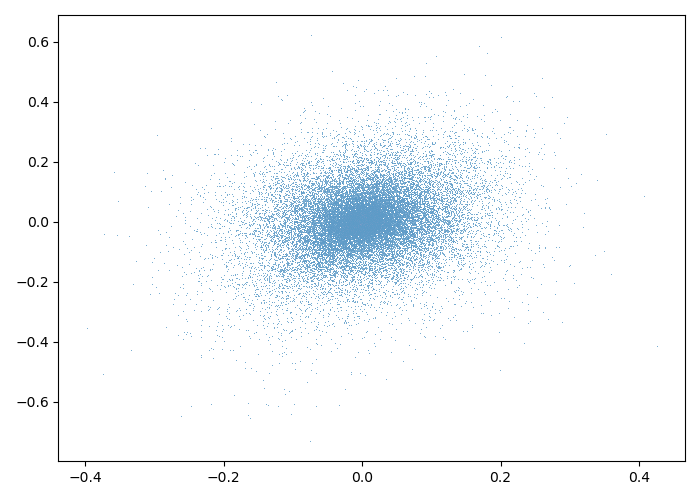}}%
    \subfloat[\centering Epoch 40]{\includegraphics[width=0.22\textwidth]{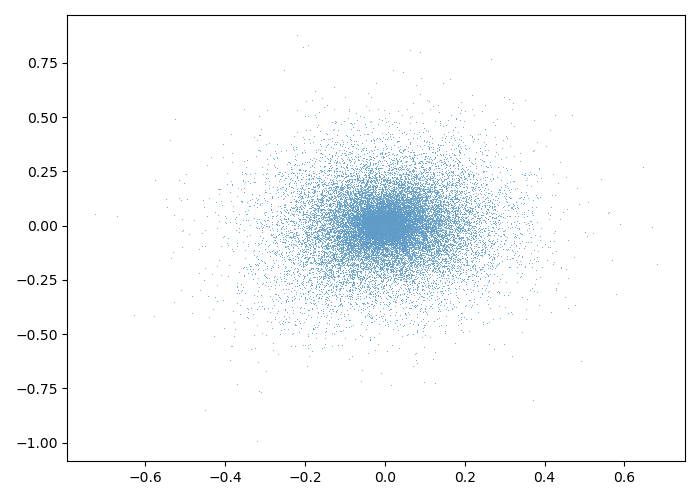}}%
    \subfloat[\centering Epoch 60]{\includegraphics[width=0.22\textwidth]{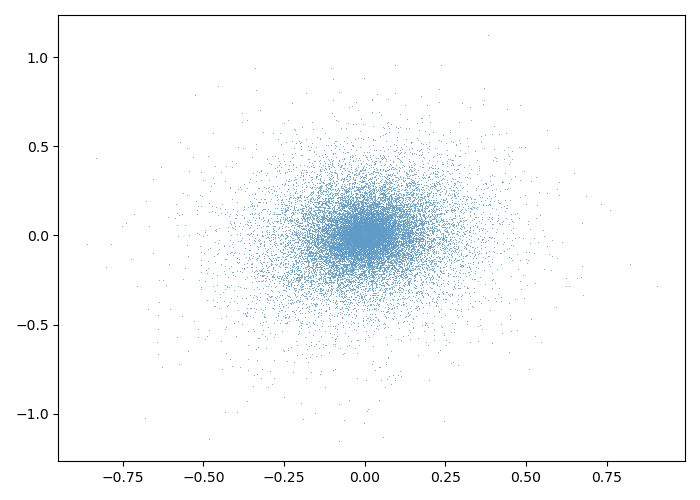}}%
    \subfloat[\centering Epoch 79]{\includegraphics[width=0.22\textwidth]{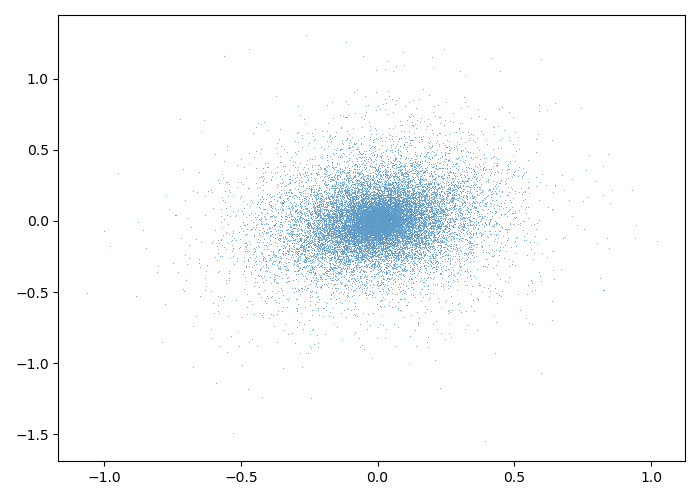}}%
    \caption{Gradient deviation distribution during training, projected onto a 2D using a random matrix. Network is DP-CNN, method is DP-SGD with RS, dataset is CIFAR10.}
    \label{fig:rp4}
\end{figure}
\vfill

\section{Further discussion of gradual cooling}
\label{app:gradual_cooling}
For the convenience we restate \Cref{eq:dprs} here, and for the clarity we define $\Delta_C:= \frac{\Delta_\mathcal{L}}{\gamma T} + \gamma\frac{GC^2}{2} - \frac{1}{T}\sum_{t=1}^T\E_m[b_t']$ to denote the terms that are enlarged by RS:
\begin{equation}
    \frac{1}{T}\sum_{t=1}^T P_{\xi_t\sim\Tilde{p}_t}(\|\xi_t\| < 3C/4)h(\nabla w_t)\|\nabla w_t\|
    \leq\frac{1}{\kappa}\Delta_C + \frac{1-r}{\kappa}\gamma\Delta_\sigma.
\end{equation}
We see $\Delta_\sigma = C^2\sigma^2dG/2B^2$ can be computed given the constants, assume $\Delta_C$ is also given, to deduce the optimal sparsification rate, we need to relate $\kappa$ to $r$. However, beyond giving a range $\kappa\in (1-r, 1)$, it is not possible to represent $\kappa$ by $r$ without supposing an additional assumption over the shape of the distribution of the gradient deviation, which is difficult to make. We therefore start from the empirical evidence in Figure~\ref{fig:ablation}. 

Recall that the factor introduced by RS for noisy SGD is $1/(1-r)$ (see \Cref{prop:perturbed_opti}) and for SGD with gradient clipping is $1/k$ (see \Cref{prop:rsandclip}). From Figures~\ref{fig:c-2},~\ref{fig:c-1},~\ref{fig:c0}, we see that as $r$ increases SGD with gradient clipping also obviously converges slower, which implies that $1/k$ also grows with $r$. From \Cref{fig:sigma0},~\ref{fig:sigma2},~\ref{fig:sigma4}, we find that the performance of noisy SGD tends to drop more dramatically in a high sparsification regime, like a parabolic shape, which implies $1/k$ is comparatively more stable than $1/(1-r)$ as $r$ changes.

Based on these observations from Figure 2, we suppose that $\kappa = \sqrt{1-r}$, which complies with the theoretical analysis that $\kappa=1$ when $r=0$ and $\kappa\in (1-r, 1)$. Using this relation we can express the convergence bound as a function of sparsification rate $r$:
\begin{equation}
    U(r) := \frac{1}{\sqrt{1-r}}\Delta_C + \sqrt{1-r}\Delta_\sigma.
\end{equation}
To find the optimal $r^*$, we can simply compute the
derivative of $U$:
\begin{equation}
    U'(r) = 1/2(1-r)^{-3/2}\Delta_C - 1/2(1-r)^{-1/2}\Delta_\sigma
\end{equation}
Consider the first order condition, we have:
\begin{equation}
    r^*=1-\Delta_C/\Delta_\sigma.
\end{equation}
So when $\Delta_C > \Delta_\sigma$, i.e.\ the noise term is insignificant in the convergence bound, we have $r^*=0$ is optimal. While when $\Delta_\sigma > \Delta_C$, we have $r^*=1 - \Delta_C/\Delta_\sigma$ is optimal, and as $\Delta_\sigma$ becomes dominant, higher sparsification rate is preferred. We see this conclusion matches our empirical evidence in Figure~\ref{fig:sensitivity}. 

Although in practice $\Delta_C$ is unknown and $r^*$ therefore cannot be precisely estimated, the observation and conclusion above reason and support the gradual cooling. At early training stages, the network converges fast: $\E[\mathcal{L}_t - \mathcal{L}_{t+1}]$ is large, while at late training stages, the optimization reaches a plateau: $\E[\mathcal{L}_t - \mathcal{L}_{t+1}]$ decays, in contrast $\Delta_\sigma$ is constant and becomes relatively large, the best sparsification rate $r^*$ thus should be increasing during training. 
\section{Scaling rule for hyperparameter tuning}
\label{app:scaling_rule}
First order momentum is commonly adopted in the optimization with DP-SGD, because momentum can alleviate oscillation and accelerate gradient descent~\citep{sgd}, it is therefore believed to reduce the number of iterations of training and therefore achieve less privacy loss. 
However, for privacy-preserving training, momentum will also exaggerate the added Gaussian noise by incorporating current and all historical noise.
Denote velocity update: $v_{t+1} = \mu\cdot v_t + a\cdot \Tilde{g}_{t+1}$, where $v,\, \mu,\, \Tilde{g}$ denote perturbed velocity, momentum and perturbed gradients, respectively, common implementation of SGD with momentum includes $a=1$, e.g.\ for Pytorch \citep{pytorch} or $a=-\gamma$, e.g.\ for Tensorflow \citep{tensorflow}.
Using the expression of one step noise in \Cref{eq:gaussian_mechanism} and denoting by $\hat{v}_t$ the velocity after separating the noise, we have: 
\begin{equation}
    v_{t+1} - \hat{v}_{t+1} = (1 + \mu + \mu^2 + ... + \mu^t)\cdot a\mathcal{N}(0, C^2\sigma^2\mI_d).
\end{equation}
After many iterations, the scalar approximates a geometric series, i.e.:
\begin{equation}
    v_{t+1} - \hat{v}_{t+1} \approx \frac{1}{1-\mu}\cdot a\mathcal{N}(0, C^2\sigma^2\mI_d),
\end{equation}
with a residual decaying exponentially.
Pulling the clipping bound $C$ out and forming the noise as $C(1-\mu)\cdot a\mathcal{N}(0, \sigma^2\mI_d)$, we present a scaling rule for adjusting $C$ and $\mu$, namely the local optimal values of $C, \mu$ satisfy the same $C/(1-\mu)$: the same amount of noise.
As shown in Figure~\ref{fig:inversely_proportional_rule}, the local optima of every column or row is located on the diagonal, where $C/(1-\mu) = C^*/(1-\mu^*)$ ($C^*, \mu^*$ are given in previous works). 
\begin{figure}[ht]
    \centering
    \subfloat[\centering DP-CNN, $\varepsilon=3$ \label{fig:ip_a}]{\includegraphics[width=0.25\textwidth]{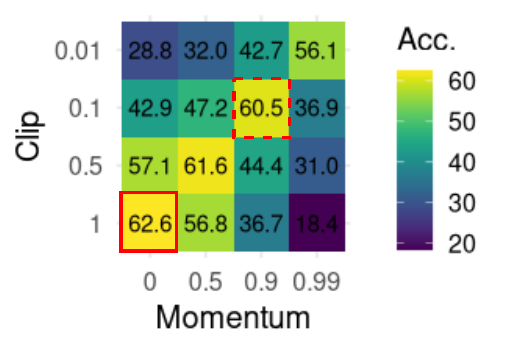}}%
    \subfloat[\centering DP-CNN, $\varepsilon=7.53$\label{fig:ip_b}]{{\includegraphics[width=0.25\textwidth]{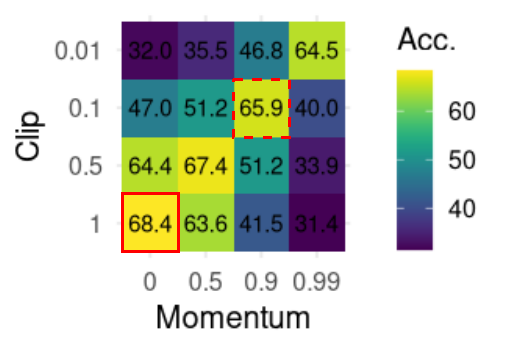} }}%
    \subfloat[\centering DP-Transfer Learning, $\varepsilon=2$\label{fig:ip_c}]{{\includegraphics[width=0.25\textwidth]{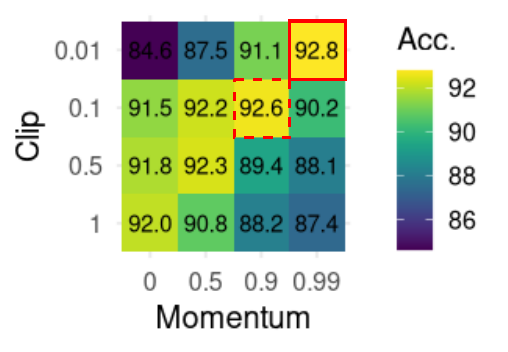} }}%
    \subfloat[\centering Handcrafted CNN, $\varepsilon=3$\label{fig:ip_d}]{{\includegraphics[width=0.25\textwidth]{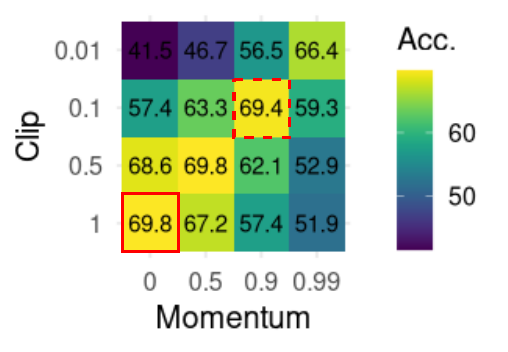} }}%
    \caption{Accuracy (\%) under different combinations of clipping bound $C$ and momentum $\mu$. The diagonals represent combinations adjusted by the scaling rule based on values of $C$ and $\mu$ given in previous works. The red dashed square marks the given hyperparameters, and the red solid square the optimal hyperparameters.}
    \label{fig:inversely_proportional_rule}
\end{figure}

The scaling rule suggests a workload saving pipeline for tuning, as it decomposes a 2D search into a line search: i) Set either $C$ or $\mu$ fixed and search for the other (search in a row or column); ii) Adjust $C, \mu$ jointly according to the scaling rule to find the optimal combination (searching along the diagonal). 

We note that the scaling rule assumes noise being directly added to velocity, for projected DP-SGD where the noise has been projected, this rule may not be applied. Additionally, for some adaptive optimizers, e.g. Adam~\citep{adam}, computing of velocity is implemented as follows:
\begin{align}
    v_{t+1} &= \mu\cdot v_t + (1-\mu)\Tilde{g}_t,\\
    \label{eq:scaling_velocity}
    v_{t+1}^* &= v_{t+1} / (1 - \mu^t),
\end{align}
If we ignore the scaling in \Cref{eq:scaling_velocity} where the factor $1 - \mu^t$ increases rapidly to 1, we have:
\begin{align}
    v_{t+1} - \hat{v}_{t+1} &= (1 + \mu + \mu^2 + ... + \mu^t)\cdot (1-\mu)\mathcal{N}(0, C^2\cdot\sigma^2\mI_d)\\
    &\approx\frac{1}{1-\mu}\cdot (1-\mu)\mathcal{N}(0, C^2\cdot\sigma^2\mI_d)\\
    &=\mathcal{N}(0, C^2\cdot\sigma^2\mI_d).
\end{align} 
Here we see $C$ and $\mu$ are decoupled in term of the noise amount, which suggests that they can be tuned independently. So searching for the best combination can be decomposed into searching in a row followed by its column or vice versa, as the best $\mu$ is the same for different $C$ and vice versa, see Figure~\ref{fig:scaling_rule2}.
\begin{figure}[h]
    \centering
    \subfloat[\centering $\varepsilon=3$ \label{fig:e80_ip_a}]{\includegraphics[width=0.25\textwidth]{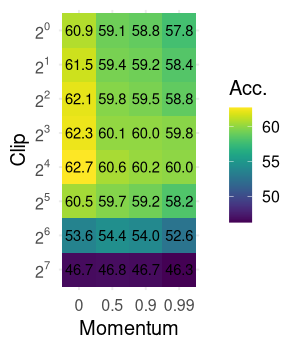}}%
    \subfloat[\centering $\varepsilon=7.53$\label{fig:e80_ip_b}]{{\includegraphics[width=0.25\textwidth]{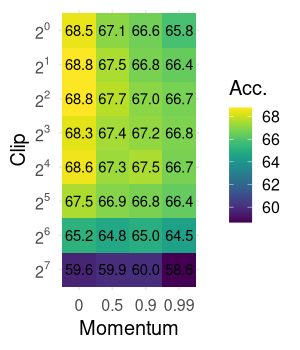}}}%
    \caption{Accuracy (\%) under different combinations of clipping bound and first order momentum. The network is DP-CNN and the optimizer is Adam~\citep{adam}.}
    \label{fig:scaling_rule2}
\end{figure}

\section{DP-SGD with ranked sparsification}
\label{app:ranked_freeze}
For ranked sparsification, we consider a straightforward implementation which ranks the (absolute) mean of the perturbed gradients of last epoch and sparsifies accordingly.
As DP is robust to post-processing, such ranked sparsification does not consume the privacy budget.
Algorithm~\ref{alg:ranked_freeze} describes the proposed ranked sparsification. Note that the noise added to the gradient mean estimation \ $z^e$ is not sparsified (Line~\ref{alg:ranked_freeze:perturbed_gradient}), otherwise the coordinates that get masked out at the first iteration will receive gradient mean estimation as 0 and never get updated for all the remaining iterations, which will degrade the network. 
Adding noise to sparsified coordinates can give these coordinates a chance to be ranked in higher positions, while in turn the coordinates being updated but with low magnitude of their true gradient may get masked out in the next iteration.

However, it turns out that ranked sparsification \emph{cannot} outperform random sparsification, more precisely it performs the same as RS. We further find that it seems the ranking is fundamentally random as the ranking is dominated by Gaussian noise. To demonstrate this, we run random sparsification and ranked sparsification for 100 epochs, then statistically analyze the distribution of how many times a parameter is masked out. 
The result shows the equivalency between these two strategies, which implies even averaging the perturbed gradients over a full epoch cannot sufficiently mitigate the noise added in gradient perturbation, see Figure~\ref{fig:frozen_times}.
\begin{figure}[h!]
\centering
\includegraphics[width=0.6\textwidth]{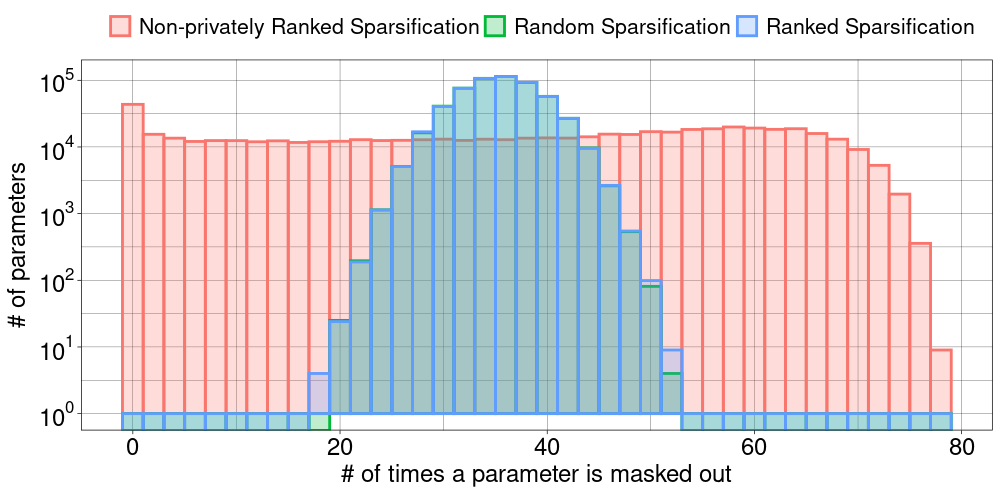}
\caption{Histogram of the number of parameters vs. the number of times a parameter is masked out during 100 epochs training. For comparison we present non-privately ranked sparsification, i.e.\ ranking after excluding noise. Random sparsification and ranked sparsification mostly overlap.}
\label{fig:frozen_times}
\end{figure}

This indicates that in the context of DP-SGD selecting the unimportant coordinates is difficult and stochasticity cannot be fully eliminated. Although there exist options to increase the precision of ranking, for example using public dataset~\citep{bypassing_ambient_space, do_not_overbill} or via sparse vector techniques (SVT) and DP selection~\citep{wide,DP3}, there is probably a mismatch of the empirical distributions of public dataset and private dataset, while SVT and DP selection definitely contains randomness. However, as we show with RS, under such randomness, sparsification could still be beneficial for the optimization, which was not realized before.

\section{{DP transfer learning with random sparsification}}
\label{app:transfer}
Differentially private (DP) transfer learning has recently gained popularity due to its effectiveness in various downstream tasks~\citep{handcrafted-sgd,large_model,dplora,biastermonly}. DP linear probing reuses the feature extractor, significantly reducing the gradient dimension~\citep{DP-SGD,handcrafted-sgd}. \citet{large_model} demonstrated that DP fine-tuning of pre-trained large language models can achieve performance close to non-private fine-tuning. Meanwhile, \citet{dplora,biastermonly} showed that parameter-efficient fine-tuning outperforms full fine-tuning.

However, when applying RS to DP transfer learning, we have not observed significant performance improvement with high sparsification rates. The reason is that the configurations of DP transfer learning reduce the efficiency of RS. Based on \Cref{rem:dprs}, RS is beneficial when the clipping bound $C$, noise multiplier $\sigma$, and gradient dimension $d$ are large, while the batch size $B$ is small. Although DP transfer learning adopts large models, linear probing or parameter-efficient methods constrain the gradient dimension $d$ to be less than $1\%$ ~\citep{dplora} or even $0.1\%$~\citep{biastermonly} of the original parameter space, rendering it small. Furthermore, since pre-trained networks converge quickly on downstream tasks, DP transfer learning methods typically employ hyperparameters such as large batch size $B$, small clipping bound $C$, and noise multiplier $\sigma$, making random sparsification less efficient. In contrast, in a training-from-scratch scenario, networks require more iterations to train. Large batch size $B$ and small noise multiplier $\sigma$ result in fewer iterations for a given privacy budget while small clipping bound $C$ limits the gradient magnitude, therefore they are not preferred in the hyperparameter tuning. And random sparsification is more favorable for training from scratch.

RS may be beneficial for DP full fine-tuning where the gradient dimension $d$ is sufficiently large. However, as this work focuses on the unique interaction between DP-SGD and RS, we leave a more comprehensive study of practical usage of RS for future research.

\section{{Poisson sampling and random shuffle}}
\label{app:sample}
In the analysis of DP-SGD~\cite{DP-SGD}, Poisson sampling is used to induce the privacy amplification \citep{privacy_amplification3, privacy_amplification4, privacy_amplification5}. In the implementation of our baselines and many previous works~\citep{tempered_sigmoid,handcrafted-sgd,do_not_overbill}, sampling is implemented by random shuffling the dataset and partitioning with fixed batch size, i.e.\ uniform sampling without replacement. It is reported that the model performance remains approximately the same under these two sampling schemes \citep{handcrafted-sgd}. In this work, we follow the baselines to conduct the experiments. To validate that the benefit of RS is consistent under this difference, we redo the experiments with DP-CNN on CIFAR10. The results are given in \Cref{tab:sample}.  We note that there is no significant difference in performance between "partition" and "Poisson" settings, and that the advantages of RS hold in both.

\begin{table}[h!]
\begin{center}
\begin{small}
\begin{tabular}{c|cccc}
\toprule
Sampling          &$\varepsilon$        &Baseline       &RS (ours) &Difference\\
\midrule
\multirow{2}{*}{Parition}
              &3.0      &$62.8\pm0.10$   &\bftab 64.3 $\pm$ 0.17 &+1.5\\
          &1.0       &$52.5\pm0.25$        &\bftab 55.1 $\pm$ 0.11 &+2.6\\
\midrule
\multirow{2}{*}{Poisson}
             &3.0               &$62.8\pm0.27$   &\bftab 64.6 $\pm$ 0.25 &+1.8\\
  &1.0       &$52.8\pm0.24$        &\bftab 55.2 $\pm$ 0.25 &+2.4\\
\bottomrule
\end{tabular}
\end{small}
\caption{Test accuracy (\% $\pm$ SEM) 
before and after adopting random sparsification with different sampling schemes. Partition indicates random shuffling the dataset and partitioning into batches of fixed size per epoch.
}
\label{tab:sample}
\end{center}
\end{table}

\end{document}